\theoremstyle{definition}
\theoremstyle{remark}
\newtheorem{remark}{Remark}
\newcommand{\Dtgt}{\mathfrak{D}_{\textrm{tgt}}}
\newcommand{\Dsrc}{\mathfrak{D}_{\textrm{src}}}
\newcommand{\Nsrc}{N_{\textrm{src}}}
\newcommand{\Qproc}{Q^w}
\newcommand{\Rmeas}{Q^\eta}
\newcommand{\Dtrain}{\mathfrak D_{\rm src}}
\newcommand{\Dtest}{\mathfrak D_{\rm tgt}}
\newcommand{\Real}{\mathbb{R}}
\newcommand{\Ncal}{\mathcal{N}}
\newcommand{\PP}{\boldsymbol{P}}
\newcommand{\tikzcustomremake}[2]{	\tikzset{external/remake next}}
\renewcommand*\env@matrix[1][*\c@MaxMatrixCols c]{	\hskip -\arraycolsep
	\let\@ifnextchar\new@ifnextchar
	\array{#1}}
\newcommand{                     
		\input{tikztrim}
		\begin{minipage}{\columnwidth} 		\begin{flushright}

			\tikzsetnextfilename{}
			\filemodCmp{\figdir/.tikz}{\figdir/.pdf}			{\tikzcustomremake{}{\figdir\}}			{}			{\input{\figdir/.tikz}}
		\end{flushright}
	\end{minipage}	}[3]{                     
		\input{tikztrim}
		\begin{minipage}{\columnwidth} 		\begin{flushright}

			\tikzsetnextfilename{#1}
			\filemodCmp{\figdir/#1.tikz}{\figdir/#1.pdf}			{\tikzcustomremake{#1}{\figdir\#1}}			{}			{\input{\figdir/#1.tikz}}
		\end{flushright}
	\end{minipage}	}

\newcommand{                     
		
		\begin{minipage}{\columnwidth} 		
		\centering

		\tikzsetnextfilename{}
		\filemodCmp{\figdir/.tikz}{\figdir/.pdf}		{\tikzcustomremake{}{\figdir\}}		{}		{\input{\figdir/.tikz}}
		
	\end{minipage}	}[3]{                     
		
		\begin{minipage}{\columnwidth} 		
		\centering

		\tikzsetnextfilename{#1}
		\filemodCmp{\figdir/#1.tikz}{\figdir/#1.pdf}		{\tikzcustomremake{#1}{\figdir\#1}}		{}		{\input{\figdir/#1.tikz}}
		
	\end{minipage}	}
\newcommand{                     
		\input{tikztrim}
		\begin{minipage}{\textwidth} 		
		\begin{flushright}

			\tikzsetnextfilename{}
			\filemodCmp{\figdir/.tikz}{\figdir/.pdf}			{\tikzcustomremake{}{\figdir\}}			{}			{\input{\figdir/.tikz}}
		\end{flushright}		
	\end{minipage}	}[3]{                     
		\input{tikztrim}
		\begin{minipage}{\textwidth} 		
		\begin{flushright}

			\tikzsetnextfilename{#1}
			\filemodCmp{\figdir/#1.tikz}{\figdir/#1.pdf}			{\tikzcustomremake{#1}{\figdir\#1}}			{}			{\input{\figdir/#1.tikz}}
		\end{flushright}		
	\end{minipage}	}

\newcommand{                                        \centering
	
	\tikzsetnextfilename{}
	\filemodCmp{\figdir/.tikz}{\figdir/.pdf}	{\tikzcustomremake{}{\figdir\}}	{}	{\input{\figdir/.tikz}}
}[1]{                                        \centering
	
	\tikzsetnextfilename{#1}
	\filemodCmp{\figdir/#1.tikz}{\figdir/#1.pdf}	{\tikzcustomremake{#1}{\figdir\#1}}	{}	{\input{\figdir/#1.tikz}}
}

\newcommand{                     
		\input{tikztrim}
		\begin{minipage}{\textwidth} 		\begin{flushright}

			\tikzsetnextfilename{}
			\filemodCmp{\figdir/.tikz}{\figdir/.pdf}			{\tikzcustomremake{}{\figdir\}}			{}			{\input{\figdir/.tikz}}
		\end{flushright}
	\end{minipage}	}[3]{                     
		\input{tikztrim}
		\begin{minipage}{\textwidth} 		\begin{flushright}

			\tikzsetnextfilename{#1}
			\filemodCmp{\figdir/#1.tikz}{\figdir/#1.pdf}			{\tikzcustomremake{#1}{\figdir\#1}}			{}			{\input{\figdir/#1.tikz}}
		\end{flushright}
	\end{minipage}	}

\newcommand{\fig}[1]{Fig.~\ref{fig:#1}}
\newcommand{\eqnlabel}[1]{\label{eq:#1}}
\newcommand{\eqn}[1]{(\ref{eq:#1})}

\renewcommand{\Real}{\mathbb{R}}

\newcommand{\ec}{\mathcal{E}}
\newcommand{\dc}{\mathcal{D}}
\newcommand{\projop}{\mathcal{P}}

\newcommand{\Pres}{\mathsf{P}}
\newcommand{\Enth}{\mathsf{h}}
\newcommand{\Temp}{\mathsf{\theta}}

\newcommand{\matvar}[1] {{#1}}

\newcommand{\x}{\matvar{x}}

\newcommand{\Gaussian}{\mathcal{N}}

\newcommand{\Fjac}{\matvar{F}}
\newcommand{\Hjac}{\matvar{H}}

\newcommand{\vgrad}{\vv\nabla}

\newtheorem{lemma}{Lemma}
\newtheorem{theorem}{Theorem}

\theoremstyle{definition}

\usepackage{newpxmath}

\title{Meta-Learning for  Physically-Constrained Neural System Identification}
\author{Ankush Chakrabarty$^*$$^\dag$\thanks{$^*$Corresponding author. Email: \texttt{achakrabarty@ieee.org}.}, Gordon Wichern,  Vedang M. Deshpande, Abraham P. Vinod, \\ Karl Berntorp, and Christopher R. Laughman
\thanks{$^\dag$All authors are affiliated with Mitsubishi Electric Research Laboratories, 201 Broadway, 8th Floor, Cambridge, MA 02139, USA.}}

\begin{document}
\maketitle
\begin{abstract}
We present a gradient-based meta-learning framework for rapid adaptation of neural state-space models (NSSMs) for black-box system identification. When applicable, we also  incorporate domain-specific physical constraints to improve the accuracy of the NSSM. The major benefit of our approach is that instead of relying solely on data from a single target system, our framework utilizes data from a diverse set of source systems, enabling learning from limited target data, as well as with few online training iterations. Through benchmark examples, we demonstrate the potential of our approach, study the effect of fine-tuning subnetworks rather than full fine-tuning,  and report real-world case studies to illustrate the practical application and generalizability of the approach to practical problems with physical-constraints. Specifically, we show that the meta-learned models result in improved downstream performance in model-based state estimation in indoor localization and energy systems.
\end{abstract}

\begin{IEEEkeywords}
Nonlinear systems, grey-box models, physics-informed machine learning, Kalman filtering, transfer learning, Koopman operator, nonlinear estimation.
\end{IEEEkeywords}

\section{Introduction}
Data-driven system identification is often a necessary step for model-based design of control systems. While many data-driven modeling frameworks have been demonstrated to be effective, the class of models that contain a state-space description at their core have typically been easiest to integrate with model-based control and estimation algorithms, e.g., model predictive control or Kalman filtering. 
Early implementations of neural state-space models (SSMs) employed shallow recurrent layers and were dependent on linearization to obtain linear representations~\cite{zamarreno1998state} or linear-parameter-varying system representations~\cite{bao2020identification}. 
Recent advancements in deep neural networks have enabled introducing SSMs into the neural architecture explicitly without post-hoc operations~\cite{forgione2022learning}. Therefore the SSM description can be learned directly during training; see~\cite{legaard2021constructing} for a recent survey. Unmodeled dynamics can be represented using neural SSMs after constructing a physics-informed prior model~\cite{forgione2020model,forgione2021dynonet}, and additional control-oriented structure can be embedded during training~\cite{skomski2021automating}. 
Another interesting direction of research has led to the development of autoencoder-based SSMs, where the neural architecture comprises an encoder that transforms the ambient state-space to a (usually high-dimensional) latent space, a decoder that inverse-transforms a latent state to the corresponding ambient state, and a linear SSM in the latent space that satisfactorily approximates the system's underlying dynamics~\cite{masti2021learning,iacob2021deep,bertalan2019learning}. 
Even without the decoder, deep encoder networks have proven useful for neural state-space modeling~\cite{beintema2021nonlinear}. 
An argument for the effectiveness of autoencoder-based approaches is based on Koopman operator theory~\cite{koopman1932dynamical}, which posits that a nonlinear system (under some mild assumptions) can be lifted to an infinite-dimensional latent space where the state-transition is linear; an autoencoder allows a finite-dimensional, therefore tractable, approximation of the Koopman transformations~\cite{lusch2018deep}.

Almost without exception, neural SSMs have been constructed using data from a single \textit{target} system under consideration. In practice though, modeling experts often have access to data for a range of similar (not necessarily identical) \textit{source} systems. These source systems usually contain information that could speed up neural SSM training for a target system, as long as the target system is similar to some of the source systems. Herein, we propose the use of  gradient-based meta-learning to leverage data from similar systems to rapidly adapt a neural SSM to a new target system with (i) few data points from the target system, and (ii) with very few online training iterations. To the best of our knowledge, we are the first to propose gradient-based meta-learning for physically-constrained system identification of black-box dynamics.

Previous work that bears some resemblance to this idea include: reduction of model estimation error by using data from linear systems within a prescribed ball~\cite{9867413} and in-context learning with attention-based transformer networks that can represent SSMs for classes of dynamical systems, rather than single instances of that system class~\cite{forgione2023system, piga2024adaptation}. In~\cite{9867413}, the key difference with our approach is that the underlying class of systems considered are linear, which we greatly expand upon in this paper. The key differences with the line of work using transformer-based architectures is more nuanced. First, as argued in~\cite{bemporad2024linear}, many dynamical systems can be identified with shallow or slightly deep networks, and rarely require massive very large sizes, such as foundation models and transformer networks that are often extremely deep and contain millions (sometimes, billions) of parameters. Such massive transformer networks are beneficial for generalization and large-scale learning, but will  be difficult to deploy in control or estimation applications requiring online adaptation, fast Jacobian calculations for linearization, or implementation on-chip. Therefore, it is necessary to devise fast adaptation mechanisms for small-sized networks, which is the objective of this work. Second, gradient-based meta-learning does not require massive pre-training datasets, making it more amenable to applications where a large number of similar systems' data is not available. Conversely, we will show empirical evidence that our proposed methodology can train from small training sets and adapt with very few online iterations. Outside of system identification,  meta-learning has been proposed for optimization~\cite{zhan2022calibrating,chakrabarty2022meta}, adaptive control~\cite{DBLP:journals/corr/abs-2103-04490}, and receding horizon control~\cite{arcari2020meta,muthirayan2020meta,10711717}.

The main \textit{contribution} of this work is to propose a meta-learning approach for neural state-space modeling using data obtained from a range of similar systems. To reiterate, in meta-learning, a neural network is trained on a variety of similar source system models so that it can make accurate predictions for
a given target system model, with only a few data points from the target system and a small number of gradient-based adaptation steps. Concretely, we learn neural SSM weights from a variety of source systems' data. Consequently, even with a small amount of the target system's data, the SSM can be quickly adapted online to obtain a set of encoder weights and state/output matrices representing the target system dynamics. This meta-learned neural SSM is shown via a numerical example to result in higher predictive accuracy than a neural SSM trained solely using target system data, or even a neural SSM trained on the entire source plus target data. 

We employ model agnostic meta-learning (MAML) algorithms, which are trained by solving a bi-level optimization problem~\cite{finn2017model}, wherein the outer loop extracts task-independent features across a set
of source tasks, and the inner loop adapts to a specific model with a few iterations and limited data. Since the bi-level training paradigm can often lead to numerical instabilities~\cite{antoniou2019train}, a recent variant of MAML, referred to as almost-no-inner-loop (ANIL), slices the network into a base-learner and a meta-learner, and dispenses with (or significantly cuts) inner-loop updates, to improve meta-training performance~\cite{raghu2019rapid}. Additional improvements have been made to the base MAML algorithm to reduce memory- and sample-complexity such as implicit MAML~\cite{rajeswaran2019meta}, Reptile~\cite{nichol2018reptile}, and a perturbation-based first-order MAML~\cite{chayti2024new}, but we focus on the MAML and ANIL algorithms as the proposed approach can easily be adapted to these variants. Additional contributions include: incorporating physical constraints on the unconstrained neural SSM for two real-world case-studies, benchmarking the proposed approach against existing neural system identification methods, and investigating the effectiveness of full MAML adaptation versus ANIL adaptation of a subset of neural SSM layers.

The rest of the paper is organized as follows. In Section~\ref{sec:neural_sys_id}, we introduce the neural state-space model and formally state the system identification problem when similar systems' data is available. In Section~\ref{sec:maml_anil}, we present the meta-learning approach for physics-constrained system identification. In Section~\ref{sec:example}, we illustrate the proposed approach using two numerical examples, one for benchmarking against previous neural system identification methods, and one for demonstrating the effectiveness of MAML and ANIL. In Section~\ref{sec:case_studies}, we present two real-world case studies to demonstrate the practical application of the proposed approach. The appendices contain additional explanatory figures.

\section{Neural System Identification}
\label{sec:neural_sys_id}
\begin{figure}[!ht]
\centering
\includegraphics[width=0.75\textwidth]{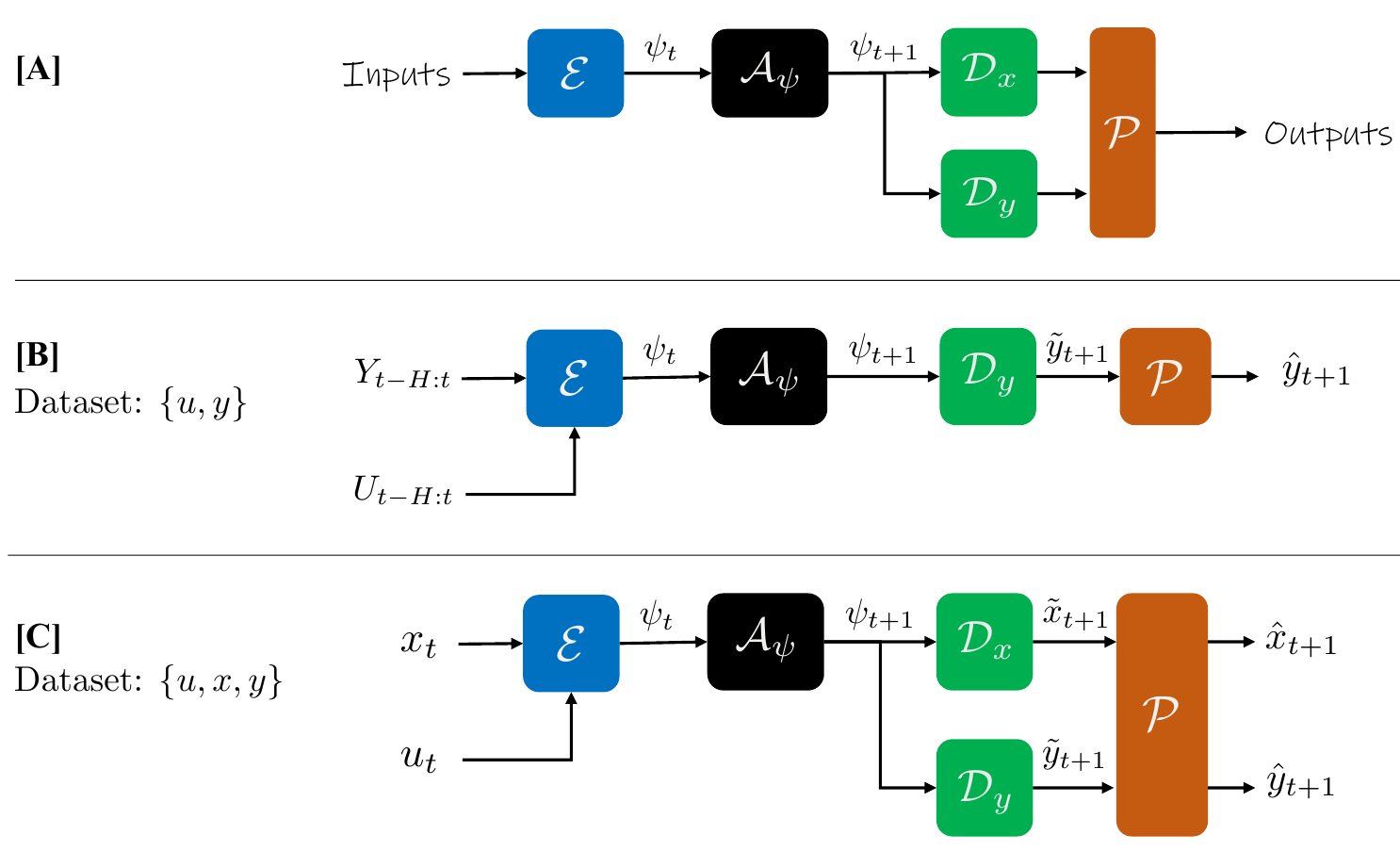}
\caption{Neural state-space models: general abstraction, and specific models for different  target dataset composition.}
\label{fig:nssm_general}
\end{figure}
\subsection{Motivation}
We assume that the \underline{target} dynamical system under consideration has the form
\begin{align}
\label{eq:nonlinear_system}
x_{t+1} &= f(x_t,u_t|\theta^\star), \;\;
y_t = h(x_t,u_t|\theta^\star),
\end{align}
where $x_t\in\mathbb{R}^{n_x}$ is the state, $u_t\in\mathbb{R}^{n_u}$ is the input, $y_t\in\mathbb{R}^{n_y}$ is the output, and $\theta^\star\in\mathbb{R}^{n_\theta}$ is the parameter vector. We do not assume that the modeling functions $f$ and $h$ are known, only that they satisfy basic continuity and differentiability conditions, so that the system is well-posed, and solutions to the difference equations exist and are unique. 

We assume that the unknown parameter vector $\theta^\star$ belongs to a compact set $\Theta\subset\mathbb{R}^{n_\theta}$, and that the set $\Theta$ is known at design time. This is not a restrictive assumption, as most physical systems have models with parameters that can be, for instance, bounded within a certain range. 

We assume the availability of a \emph{target dataset} $\Dtgt$ from the system~\eqref{eq:nonlinear_system}, with limited dataset size. In the classical system identification setting, the target dataset contains input-output $\{u_t, y_t\}_{k=0}^{T^\star}$ data, with small $T^\star\in\mathbb N$. Alternatively, one may have access to a specific state representation predefined in a computer simulation environment: for example, in a digital twin of the target system under consideration. In such cases, the state representation $x_t$ may be available, and the target dataset $\Dtgt$ may contain state-input $\{x_t, u_t, y_t\}_{k=0}^{T^\star}$ data.  Although the modeling functions $f$ and $h$ may be accessible in this setting, advanced simulation software often comprise modules of such high model complexity that attempting model-based design with these modules is often impractical, and sometimes intractable: for instance, if the model components cannot be represented using closed-form expressions.

Our broad objective is to leverage the limited dataset $\Dtgt$ to learn a neural predictive model that predicts the evolution of the system~\eqref{eq:nonlinear_system} accurately using a history of measured variables, while enforcing domain-specific or `physics' constraints on the system. The physics constraints are typically derived from first principles, and are used to ensure that the learned model is physically consistent with the target system. We do not claim to be able to handle general nonlinear constraints, and limit ourselves to specific classes of constraints that arise in systems that we present later in Section~\ref{sec:physics-ai}. These constraints can be explicitly modeled using specific layers and activations in a neural network architectures.

\subsection{Neural State-Space Modeling}

To fulfill our objective, we propose using a neural state-space model (NSSM) architecture that can be represented in a general form; see Figure~\ref{fig:nssm_general}. Inputs to the NSSM are passed through encoding layers $\mathcal E(\cdot)$ to transform the inputs into a latent state $\psi_t$, which is subsequently passed through a state-transition operator $\mathcal A_{\psi}(\cdot)$ to obtain the updated latent state $\psi_{t+1}$. This updated latent state is decoded by the layers $\mathcal D_x(\cdot)$ and $\mathcal D_y(\cdot)$ to obtain intermediate variables $\tilde x_{t+1}$ and $\tilde y_{t+1}$, from which the predicted output $\hat y_{t+1}$ and updated state $\hat x_{t+1}$ is computed by enforcing the physical constraints using the constraint operator $\mathcal P(\cdot)$.
The following cases illustrate how the NSSM architecture changes to accomodate different types of data available in the target dataset $\Dtgt$; without loss of generality, we show the operations of the NSSM for $t=0$.

\subsubsection*{Case I, $\Dtgt = \{u, y\}$}
If the target dataset $\Dtgt$ contains input-output data $\{u_t, y_t\}_{t=0}^{T^\star}$, the NSSM architecture can be modified as shown in Figure~\ref{fig:nssm_general}[B]. In this case, the encoding layers $\mathcal E(\cdot)$ transform a history of inputs $U_{t-H:t}$ and outputs $Y_{t-H:t}$ into a latent state $\psi_t$, where $H$ is a user-defined window length of past data. The latent $\psi_t$ is updated using the state-transition operator $\mathcal A_{\psi}(\cdot)$, and the updated latent state $\psi_{t+1}$ is decoded by the layers $\mathcal D_y(\cdot)$ to obtain an intermediate variable $\tilde y_{t+1}$. The predicted output $\hat y_{t+1}$ is computed by enforcing the physical constraints using the constraint-enforcing operator $\mathcal P(\cdot)$. The main operations are summarized by the equations:
\begin{subequations}
\label{eq:nssm_case1}
\begin{align}
\label{eq:nssm-case1-a}
\psi_t &= \mathcal E(U_{t-H:t}, Y_{t-H:t}),\quad
\psi_{t+1} = \mathcal A_{\psi}(\psi_t), \\
\label{eq:nssm-case1-b}
\tilde y_{t+1} &= \mathcal D_y(\psi_{t+1}), \quad
\hat y_{t+1} = \mathcal P(\tilde y_{t+1}).
\end{align}
\end{subequations}

Training the neural SSM involves computing weights of $\mathcal E$, $\mathcal A$, $\mathcal D_y$, and  $\mathcal P$. This is performed by minimizing a multi-step prediction loss as follows. For an input history $Y_{t-H:t}$ and $U_{t-H:t}$, the latent encoding $\psi_t$ is computed using~\eqref{eq:nssm-case1-a}, after which, for a user-defined prediction horizon of $H_{p}$, we recursively compute $\psi_{t+1},\cdots, \psi_{t+H_p}$ with the state-transition operator $\mathcal A_{\psi}$. Subsequently, we can compute
$\hat Y_{t:t+H_p-1} = \{\hat y_{t}, \hat y_{t+1}, \cdots, \hat y_{t+H_p-1}\}$ using~\eqref{eq:nssm-case1-b}. As the classical NSSM training is done offline, the output $y_t$ is available, with which one can construct $Y_{t:t+H_p-1}$. Then, the multi-step predictions can be evaluated via a mean-squared-error (MSE) loss function
\begin{equation}\label{eq:nssm-loss-fn-1}
\mathsf{L}^{uy}_{\rm SSM} = \mathsf{MSE}_{t:t+H_p-1}\big(y_{t+1}, \hat y_{t+1}\big)
\end{equation}
that can be minimized using batched data and stochastic gradient descent methods.

\subsubsection*{Case II, $\Dtgt = \{x, u, y\}$}
If the target dataset $\Dtgt$ contains state-input-output data $\{x_t, u_t, y_t\}_{t=0}^{T^\star}$, the NSSM architecture is modified as shown in Figure~\ref{fig:nssm_general}[C]. In this case, the encoding layers $\mathcal E(\cdot)$ transform the state $x_t$ and input $u_t$ into a latent state $\psi_t$, which is passed through the state-transition operator $\mathcal A_{\psi}$ to obtain the updated latent state $\psi_{t+1}$, from which decoders $\mathcal D_x$ and $\mathcal D_y$ compute intermediate variables $\tilde x_{t+1}$ and $\tilde y_{t+1}$, respectively. The predicted state $\hat x_{t+1}$ and output $\hat y_{t+1}$ are computed by enforcing the physical constraints using the constraint-enforcing operator $\mathcal P$. The main operations are summarized by the equations:
\begin{subequations}
\label{eq:nssm_case2}
\begin{align}
\label{eq:nssm-case2-a}
\psi_t &= \mathcal E(x_t, u_t),\;
\psi_{t+1} = \mathcal A_{\psi}(\psi_t), \\ \tilde x_{t+1} &= \mathcal D_x(\psi_{t+1}),\quad
\tilde y_{t+1} = \mathcal D_y(\psi_{t+1}),\\
\label{eq:nssm-case2-c}
\hat x_{t+1}, \hat y_{t+1} &= \mathcal P(\tilde x_{t+1}, \tilde y_{t+1}).
\end{align}
\end{subequations}

Training the neural SSM involves computing weights of $\mathcal E$, $\mathcal A$, $\mathcal D_x$, and $\mathcal D_y$. Since the network~\eqref{eq:nssm_case2} is reminiscent of neural Koopman operators~\cite{mauroy2020koopman}, we use the same loss functions as proposed in that literature. Namely, we minimize a loss function
\begin{equation}\label{eq:nssm-loss-fn-2}
\mathsf{L}^{uxy}_{\rm SSM} = \mathsf{L}_{\rm recon} + \mathsf{L}_{{\rm pred}, x} + \mathsf{L}_{{\rm pred}, y}.
\end{equation}
The components of this loss function are:
$$\mathsf{L}_{\rm recon} = \mathsf{MSE}_{t-H:t}\big(x_t \, , \mathcal P \circ\mathcal D_x\circ\mathcal{E}(x_t, u_t)\big),$$  $$\mathsf{L}_{\mathrm{pred},x} = \mathsf{MSE}_{t:t+H_p-1} \big(x_{t+1}, \hat x_{t+1} \big),$$ $$\mathsf{L}_{\mathrm{pred},y} = \mathsf{MSE}_{t:t+H_p-1}(y_{t+1}, \hat y_{t+1}),$$
where the reconstruction loss ensures consistency between the lifting encoder and the de-lifting decoder, and the prediction losses ensure good predictive performance. Note that $\mathsf{L}_{\rm recon}$ has an abuse of notation, as $\mathcal P \circ\mathcal D_x$ actually indicates only the state reconstruction component.

\begin{remark}
Note that~\eqref{eq:nssm-loss-fn-2} can be extended to multi-step prediction by recursively generating a sequence of latents $\psi_{k+1:k+N_S}$ for $N_S$ steps starting from $\psi_k$ using the state-transition operator $\mathcal A_{\psi}$ and inputs $u_{k:k+N_S-1}$. Decoding these latents would result in multi-step state and output prediction, with which MSE losses could be computed. 
\end{remark}

\subsection{Extended Kalman filtering}

We demonstrate the practical application of our proposed approach through real world examples using a meta-learned predictive model for subsequent applications, specifically, model-based estimation. We adopt an extended Kalman filtering (EKF) framework for state estimation because of its applicability to a wide range of nonlinear systems \cite{crassidis_optimal_2011}. 

In this section, we will briefly review an EKF for a general dynamic system expressed in the following form
\begin{subequations}
\label{eq:nonlinear_system_ekf}
\begin{align}
x_{t+1} &= \hat f(x_t,u_t) + w_t,\\
y_t &= \hat h(x_t,u_t) + \eta_t,
\end{align}
\end{subequations}
which is amenable to filtering formalism and follows directly from~\eqn{nonlinear_system}. Note that $\hat f(\cdot)$ and $\hat h(\cdot)$ here represent mathematical prediction models of the target system~\eqn{nonlinear_system}. 
The deviations of the predictive models \eqn{nonlinear_system_ekf} from the real system is captured by the random vectors $w_t$ and $\eta_t$ which are assumed to be zero-mean Gaussian uncertainties with covariances $\Qproc_t$ and $\Rmeas_t$, respectively. 

We use the following notations for EKF estimates: ($x_t^{-}$,$P_t^{-}$) and ($x_t^{+}$,$P_t^{+}$) respectively denote the prior (measurements assimilated up to time $t-1$) and posterior (measurements assimilated up to time $t$) mean-covariance pairs of the state vector at time $t$, i.e.,
\begin{align}
    x_{t|t-1} \sim \Gaussian(x_t^{-}, P_t^{-}) \text{ and } x_{t|t} \sim \Gaussian(x_t^{+}, P_t^{+}),
\end{align}
where $\Gaussian(\cdot)$ denotes the multivariate normal distribution.

For a given ($x_0^{-}, P_0^{-}$) and neural predictive model~\eqn{nonlinear_system_ekf}, the EKF equations are given by the time updates:
\begin{subequations}\eqnlabel{deep_cekf_all}
\begin{align}\eqnlabel{deep_cekf}
\PP_{k+1}^{-} &= \Fjac_t P_t^{+} \Fjac_t^\top + \Qproc_t, \\
\x^{-}_{k+1} &= f(x_t^+, u_t),
\end{align}
and the measurement updates:
\begin{align}
    K_t &= (P_t^{-}{H}_t^\top)({H}_t P_t^{-}{H}_t^\top + \Rmeas_t)^{-1}, \\
    P_t^{+}  &= (I - K_t{H}_t)P_t^{-}, \\
    x_t^{+} &= x_t^{-} + K_t\big({y}_t -  h(x_t^-, u_t)) \big), \eqnlabel{ekf_mean_up}
\end{align} 
\end{subequations}
where $\Fjac_t$ and $\Hjac_t$ are Jacobian matrices defined as
\begin{align}\eqnlabel{jac_defs}
    \Fjac_t & \triangleq \frac{\partial}{\partial x} f( {x_t^{+} }, u_t), \quad
    \Hjac_t \triangleq \frac{\partial}{\partial x} h(x_t^-, u_t).
\end{align}

In this work, we propose the use of meta-learned physics-constrained NSSMs described in \fig{nssm_general} as the prediction models in \eqn{nonlinear_system_ekf} for implementing the EKF \eqn{deep_cekf}. Note that computing the Jacobian matrices in ~\eqn{jac_defs} requires taking gradients of the neural operators, i.e., differentiating the outputs of the deep neural networks with respect to their inputs. To this end, we utilize auto-differentiation (AD) capabilities available in standard deep learning libraries such as \texttt{PyTorch}.

\subsection{Problem Statement}
In practice, we do not always have access to a large amount of data from the target system. Therefore, training NSSMs with limited data results in overfitting and poor generalization, i.e., inaccurate target model predictions. Fortunately, it is often the case that data from similar systems to the target system has been collected in the past. For instance, previously deployed engineering systems similar to the target application may have been monitored, and data generated from these systems can supplement the limited target dataset $\Dtgt$ to improve the generalization of the learned model. We refer to this additional dataset as the \underline{source} dataset $\Dsrc$. The source dataset $\Dsrc$ contains input-output or state-input-output data from a system that is similar to the target system, but not identical. The source dataset $\Dsrc$ is assumed to be available at design time, and to have the same signal composition as the target dataset $\Dtgt$.

More formally, to reduce the data required from the target $\theta^\star$-system, we leverage simulation models to obtain data from $\Nsrc\in\mathbb N$ dynamical systems that are similar to the target system. These \emph{source systems} adhere to the form~\eqref{eq:nonlinear_system} and are parameterized by  $\theta^\ell\in\Theta\setminus\theta^\star$ for $\ell=1,2,\ldots, \Nsrc$. By collecting data from each of the $\Nsrc$ source systems, for instance during field experiments or via digital-twin simulations, we have access to a source dataset, which can be written as
\begin{equation}\label{eq:Dtrain}
\begin{split}
\Dtrain&\triangleq\left\{u_{0:T_\ell},  y_{0:T_\ell}|\theta^\ell\right\}_{\ell=0}^{N_{\rm src}}  \\ \text{or}\;\;
\Dtrain&\triangleq\left\{x_{0:T_\ell}, u_{0:T_\ell},  y_{0:T_\ell}|\theta^\ell\right\}_{\ell=0}^{N_{\rm src}},
\end{split}
\end{equation}
obtained over a time-span from zero to $T_\ell\in\mathbb N$ for some (not necessarily identical) initial condition and excitation inputs $U_{0:T_\ell}$. Often, the target system state data is collected with expensive instrumentation over a small time span (not necessarily real-time), so $T^\star \ll \inf_\ell T_\ell$. 

By leveraging \textit{meta-learning}, we can  utilize the source data $\Dtrain$ to learn NSSMs that can represent models with different $\theta\in\Theta$ parameters and rapidly adapt (i.e., with few online training updates) to the target system dynamics with unknown parameters $\theta^\star$ despite limited target data. This adaptive identification approach also has the added practical benefit that no explicit estimation of the parameter $\theta^\star$ is required, which would likely require more data. Our objective is to design a rapid adaptation mechanism using meta-learning that can yield an accurate NSSM of the target system by learning from similar source systems. The adapted model can then be employed for downstream model-based control or model-based estimation tasks, as we will show in Section~\ref{sec:case_studies}.

\section{Model-Agnostic Meta-Learning (MAML) for  Identification}
\label{sec:maml_anil}
MAML~\cite{finn2017model} is one one of the most well-known and widely used meta-learning (i.e., learning to learn) algorithms. The goal of MAML is to learn a reusable set of network weights that can be quickly fine-tuned at inference time based on a small amount of adaptation data. MAML achieves this using a nested training scheme where an inner loop fine-tunes a common set of initial model parameters based on a small amount of task-specific adaptation data, and an outer loop that updates the initial set of model parameters across a mini-batch of different tasks. This procedure promotes learning of model parameters that can quickly adapt to new tasks. An overview of our approach applying MAML for trajectory prediction is shown in Supplementary Fig.~S1. 

Perhaps the most non-trivial aspect of implementing meta-learning algorithms such as MAML is data partitioning. Formally, we are given a dataset of source systems $\Dtrain$ defined in~\eqref{eq:Dtrain}. We can partition the data for the $\ell$-th system into a \emph{context} dataset $\mathcal{C}^{\ell}$ for inner-loop updates, and a \emph{target} set $\mathcal{T}^{\ell}$ to evaluate the loss function for the model parameters adapted in the inner-loop. 
If $\omega$ denote the set of trainable NSSM parameters\footnote{Note that care must be taken to ensure that the parameters that are meta-trained are indeed trainable. For example, parameters enforcing physics constraint layers $\mathcal P$ may need to be fixed, as we shall show in the case studies.}, then the inner-loop MAML update is
\begin{equation}\label{eq:maml_inner}
    \omega_{m}^{\ell}=\omega_{m-1}^{\ell}-\beta_{\sf in}\nabla_{\omega_{m-1}^{\ell}}\mathsf{L}_{\rm SSM}\big(\mathcal{C}^\ell;\omega_{m-1}^{\ell}\big)
\end{equation}
where $m$ denotes the iteration of inner-loop updates, $\beta_{\sf in}$ the inner-loop learning rate, and $\mathsf{L}_{\rm SSM}\big(\mathcal{C}^\ell;\omega_{m-1}^{\ell}\big)$ is the loss function from~\eqref{eq:nssm-loss-fn-1} or~\eqref{eq:nssm-loss-fn-2}, evaluated on the context set with the neural weights computed after $m-1$ inner-loop updates. The inner-loop updates are performed individually using the context set for each trajectory in the batch, while the target sets for all trajectories in the batch are used in the outer-loop. The outer-loop optimization step is typically written as
\begin{equation}\label{eq:maml_outer}
\omega=\omega-\beta_{\sf out}\nabla_{\omega}\sum_{b=1}^{B}\mathsf{L}_{\rm SSM}\big(\mathcal{T}^b;\omega_{m}^{b}\big)
\end{equation}
where $B$ is the number of tasks (i.e., trajectories) in a training mini-batch, $\beta_{\sf out}$ is the outer-loop learning rate, and $\mathsf{L}_{\rm SSM}\big(\mathcal{T}^b;\omega_{m}^{b}\big)$ is the loss function on the target set $\mathcal{T}^{b}$ after $m$ inner-loop iterations~\eqref{eq:maml_inner} have been completed. By updating the model parameters in the outer-loop as in~\eqref{eq:maml_outer} across $B$ tasks, we obtain a parameter set $\omega$ that can be quickly adapted at inference time. 
At inference-time we only perform $m$ inner-loop updates~\eqref{eq:maml_inner} using the target system data $\mathcal C^\star=\Dtgt$ and evaluate NSSM performance using updated parameters $\omega_m^{\star}$.

When partitioning trajectory data into context sets used for inner-loop fine-tuning and target sets used to evaluate the fine-tuning, one may use different approaches for training and inference. At inference time, we will typically use the first several points of the observed trajectory as the context set, and any subsequent points as the target set to simulate the real-world scenario, where we first fine-tune our meta learned model and then use it. At training time, we randomly sample consecutive points from anywhere in the trajectory as the context set, and consecutive points from anywhere in trajectory (not necessarily after the context set) as the target set. We do this to ensure that the learned model does not always expect the context set to depend on initial conditions; for instance, when the target data is taken from steady-state conditions.

The nested training scheme in MAML can lead to a very difficult optimization problem~\cite{antoniou2019train}, and updating all model parameters in the inner-loop may be unnecessary~\cite{raghu2019rapid}. For this reason, the almost no inner-loop (ANIL) algorithm was proposed in~\cite{raghu2019rapid}, which updates only the parameters in the last layer of the network in the inner-loop~\eqref{eq:maml_inner}, while parameters for all layers are updated in the outer-loop~\eqref{eq:maml_outer}. The intuition being that the outer-loop update promotes extraction of low-level features that are reusable across tasks, and the inner-loop promotes rapid learning of a final task specific layer. 

In the few-shot image classification problem commonly studied in meta-learning research, the last layer is referred to as a ``classifier'' and the earlier layers that are not updated in the inner-loop as the ``feature extractor.'' However, for the neural SSM architecture studied in this work, it is unclear whether the encoder $\mathcal E$ or the state-space model parameters $\mathcal A_{\psi}$ should be common across tasks or fine-tuned in the inner-loop, so we explore both possibilities. Formally, if $\omega=(\omega_1,...,\omega_L)$, where $\omega_l$ are the network parameters for layer $l$ and $L$ is the total number of layers in the network, we then define $\omega_{\sf in} \subseteq \omega$ as the subset of parameters updated in the inner-loop, and the inner-loop update from~\eqref{eq:maml_inner} becomes
\begin{equation}\label{eq:anil_inner}
    (\omega_l)_{m}^i\!=\!
    \begin{cases}
        \omega_l, & \!\!\!\!\!\omega_l \!\notin\! \omega_{\sf in} \\
        (\omega_l)_{m-1}^{i}\!-\!\beta_{\sf in}\nabla\mathsf{L}_{\rm SSM}\big(\mathcal{C}_{Y^i};(\omega_l)_{m-1}^{i}\big), & \!\!\!\!\!\omega_l\!\in\!\omega_{\sf in}
    \end{cases}
\end{equation}
The outer-loop of ANIL remains unchanged from MAML, and we note that ANIL and MAML are identical when $\omega_{\sf in}=\omega$. We summarize the meta-training regime using source systems' data via ANIL and MAML in Supplementary Algorithm~S1, and explain how to update the weights for the target system in Supplementary Algorithm~S2.

\section{Physically-Constrained NSSMs}
\label{sec:physics-ai}

In the extension of the proposed MAML work, we explicitly integrate physics constraints into the NSSM architecture to ensure that the predicted system trajectories are physically realizable. We consider two classes of constraints that may stem from the underlying physical laws governing the system's behavior. First, we propose an approach to constrain model predictions within a polytope.
Next, we introduce an approach to guarantee that the model predictions analytically satisfy the curl-free property from vector calculus. As demonstrated in Section \ref{sec:case_studies} through real-world applications, incorporating physics constraints into the NSSM architecture not only enhances model prediction accuracy but also improves downstream estimation performance.

\subsection{Polytopic constraints}
\label{sub:physics_VCS}

In certain applications, we know that the output of the NSSM $\hat x_t$
must satisfy constraints that can be encoded as 
$\hat x_t\in\mathcal G$, 
where $\mathcal G$ is a polytope 
\begin{equation}\label{eq:polyhedron}
\mathcal G := \left\{x\in\mathbb X: G x\le 0\right\},
\end{equation}
for some known matrix $G$.  
From the Minkowski-Weyl Theorem, the polytope $\mathcal G$ admits an equivalent description using  rays~\cite{fukuda1995double}\footnote{To be accurate,  Minkowski-Weyl states that the feasible polytope of an inhomogeneous linear inequality $Gx\le g$ can be equivalently described by a conic combination of rays and a convex combination of vertices. When $g=0$, only the conic combination remains.}. That is, for some finite set of rays $\mathcal R:=\begin{bmatrix}
    \mathcal R_1 & \cdots & \mathcal R_{r}
\end{bmatrix}\in \mathbb R^{n_x\times r}$ we can write $\mathcal G = \mathsf{cone}(\mathcal R)$, where $\mathsf{cone}(\mathcal R):= \sum_{j=1}^{r} \mu_j \mathcal R_j$ for some $\mu_j\ge 0$. 
In order to embed the physics constraint into the NSSM, we propose the following structure for the constraint layer $\mathcal P$. For the clarity of discussion, we decompose $\mathcal P$ into two parts $\mathcal P_x$ and $\mathcal P_y$ which act separately on the outputs of decoders $\mathcal D_x$ and $\mathcal D_y$ respectively. While we set $\mathcal P_y$ to be an identity matrix, we enforce the following structure on $\mathcal P_x$. 
\begin{align}\eqnlabel{constr_structure_vccs}
    \mathcal P_x:= \mathcal R \mu, \text{ where } \mu := \mathsf{ReLU}\circ \mathsf{FC}_r \circ \mathcal D_x(\psi),
\end{align}
with $\mathsf{ReLU}(\cdot)$ denoting the rectified linear unit activation function, and $\mathsf{FC}_r$ denoting a fully connected layer with output dimension $r$.
That is, we pass the decoder output $\mathcal D_x(\psi)$ through a fully connected layer activated by a ReLU function to make the vector element-wise non-negative, and therefore fulfill the requirement to act as $\mu\!\ge\!0$ in the conic constraint. This fully connected layer also maps $\mathcal D_x(\psi)$ to a vector $\mu$ that is compatible in dimensions with the ray matrix $\mathcal R$. This is because we take the product of $\mathcal R$ and $\mu$, which by construction, results in a feasible predicted state $\bar x$. 
A general NSSM \eqn{nssm_case2} corresponding to \fig{nssm_general}[C] with the constraint structure \eqn{constr_structure_vccs} was learned by minimizing the loss \eqn{nssm-loss-fn-2} using Algorithm~\ref{alg:maml}, and the weights were fine-tuned using Algorithm~\ref{alg:maml_inference} for the target systems.

Alternatively, consider the case where the constrained NSSM~\eqn{nssm_case2} is used for state estimation. 
For the clarity of discussion, we explicitly write the NSSM-based model equations similar to \eqn{nonlinear_system_ekf}, as follows
\begin{subequations}\eqnlabel{vcc_model_for_ekf}
    \begin{align}
    x_{k+1} & = \projop_x \circ\dc_x\big(\mathcal{A}\circ\ec(x_t, u_t) \big) + w_t, \\
    y_t &= \dc_y \circ\ec(x_t, u_t) + \eta_t,
    \end{align} 
\end{subequations}
where $w_t$ and $\eta_t$ are assumed to be zero-mean Gaussian uncertainties with covariances $\Qproc_t$ and $\Rmeas_t$, respectively. 
The EKF \eqn{deep_cekf} can be readily implemented using \eqn{vcc_model_for_ekf} as the prediction models. The Jacobians $\Fjac_t$ and $\Hjac_t$ used in EKF are computed as follows
\begin{subequations}\eqnlabel{jac_defs_vcs}
    \begin{align}
    \Fjac_t & \triangleq \frac{\partial}{\partial x} \projop_x\circ\dc_x\circ\mathcal{A}\circ\ec( {x_t^{+}, u_t}), \\
    \Hjac_t & \triangleq \frac{\partial}{\partial x} \dc_y \circ\ec({x_t^{-}, u_t}).
\end{align} 
\end{subequations}

Despite designing the NSSM~\eqn{vcc_model_for_ekf} to satisfy the state constraints~\eqn{polyhedron}, the filter updates using~\eqn{deep_cekf} are not designed to explicitly satisfy the constraints, which could yield physics-inconsistent state estimates. Therefore, we deem it  necessary to explicitly incorporate state constraints during the filtering process as well~\cite{deshpande2022Constrained, dansimon2010PDFtrunc}. 

There are several approaches \cite{amor2022constrainedReview} such as density truncation \cite{dansimon2010PDFtrunc}, estimate projection \cite{deshpande2022Constrained} and pseudo or perfect measurements \cite{deshpande2023MCEKSifac}, available in the literature that can be adapted for enforcing constraint in state estimation algorithms. However, unlike previous works, here we leverage the inherent constraint-satisfying nature of the NSSMs in order to enforce the same constraints on state vectors estimated by the EKF. In particular, we propose the use of auto-encoder pair $(\ec \, , \projop_x \circ \dc_x)$ for projecting the filter estimate obtained in \eqn{ekf_mean_up} on the feasible set to enforce constraints. The constraint layer $\projop_x$ ensures that the decoded state vector satisfies the gradient constraint. Therefore, a constrained state estimate can be obtained by the following reconstruction using the auto-encoder
\begin{align} \eqnlabel{ae_based_proj}
    \x_t^{+} \gets \projop_x\circ\dc_x\circ\ec(\x_t^{+}, u_t). 
\end{align}
The constraint-projection operation in \eqn{ae_based_proj} is expected to computationally efficient as it involves only forward evaluation of neural networks. This is in contrast to other prevalent constraint enforcement approaches which typically requires solving a constrained quadratic program (QP) \cite{deshpande2022Constrained, dansimon2010PDFtrunc}. 
We show in 
Section~\ref{sub:num_VCS} that polytopic constraints are critical to incorporate physics while modeling vapor compression cycles via NSSMs to ensure unidirectional refrigerant flow.

\subsection{Vector-field constraints}
\label{sub:physics_mag}

We now consider a scenario where the output 
is a vector field that must meet vector field constraints, specifically the curl-free constraint. This type of constraint can arise from physical conservation laws that govern the system, as seen in the localization application discussed in Section~\ref{subsec:localization}. In this case, the output models a magnetic field that is curl-free due to the absence of internal currents~\cite{Wahlstroem2013}.
In such mechanical engineering applications, it is common for the state $x$ to contain position information in terms of spatial coordinates. That is, some components of the state vector are known to be $[p_X,p_Y]$ or $[p_X,p_Y,p_Z]$, which denote spatial coordinates in a 2- or 3-dimensional Cartesian coordinate system.
Therefore, the curl-free constraint requires that $\vgrad \times h = 0$, where $\vgrad$ is the vector differential operator,
$\vgrad = \begin{bmatrix}
    \tfrac{\partial}{\partial p_X} & \tfrac{\partial}{\partial p_Y} & \tfrac{\partial}{\partial p_Z}
\end{bmatrix}^\top$ for 3-dimensional coordinates and $\vgrad = \begin{bmatrix}
    \tfrac{\partial}{\partial p_X} & \tfrac{\partial}{\partial p_Y} 
\end{bmatrix}^\top$ for 2-dimensions.

We again consider the base learner shown in Fig.~\ref{fig:nssm_general}[C]. The key operations in this physics-constrained NSSM are as described in~\eqref{eq:nssm_case2}. We select the physics-constraint layer $\mathcal P$ to be a vector gradient operator acting on the output channels, namely
$$\hat x_{t+1}, \hat y_{t+1} = \tilde x_{t+1}, \vgrad \tilde y_{t+1}.$$
In other words, if we decompose $\mathcal P$ into two parts $\mathcal P_x$ and $\mathcal P_y$ which act separately on the outputs of decoders $\mathcal D_x$ and $\mathcal D_y$ respectively, then $\mathcal P_x$ corresponds to an identity transformation and $\mathcal P_y$ involves computing a gradient (i.e., $\mathcal P_y = \vgrad$). Next, we discuss how the output path of the autoencoder implicitly enforces the curl-free constraint on the measurement function $h$.

The addition of physics-informed vector-field constraints into deep neural networks has been described more generally in~\cite{Hendriks2020}, and we follow their method to enforce specific curl-free constraints into our NSSM, with some additional modifications to improve training performance. 
Clearly, such a constraint would have to be enforced on the neural network mapping from components of the state $x$ to the output $y$. To this end, we invoke the following result~\cite[pp. 444]{vectorCalculusBook}.
\begin{theorem}
\label{thm:1}
Let $h$ be a vector field of class $\mathcal C^1$ whose domain is a simply connected region $\mathcal R\subset \Real^3$. Then $h = \vgrad \varphi$ for some scalar-valued
function $\varphi$ of class $\mathcal C^2$ on $\mathcal R$ if and only if $\vgrad\; \times\; h = 0$ at all points of $\mathcal R$.
\end{theorem}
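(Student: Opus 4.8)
The plan is to establish the two implications separately; the forward direction is immediate, and the converse requires constructing a potential.

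For the ``only if'' direction, suppose $h = \vgrad\varphi$ with $\varphi$ of class $\mathcal C^2$. Each component of $\vgrad\times h$ is then a difference of mixed second partials of $\varphi$ — for instance its first component is $\partial_{p_Y}\partial_{p_Z}\varphi - \partial_{p_Z}\partial_{p_Y}\varphi$ — and Clairaut's theorem on the symmetry of second derivatives (applicable since $\varphi\in\mathcal C^2$) makes each such difference vanish. Hence $\vgrad\times h = 0$ throughout $\mathcal R$.

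For the converse, assume $\vgrad\times h = 0$ on the simply connected region $\mathcal R$. I would fix a base point $a\in\mathcal R$ and define, for each $x\in\mathcal R$,
$$\varphi(x) := \int_{\gamma_x} h\cdot \mathrm{d}\rvec,$$
where $\gamma_x$ is any piecewise-$\mathcal C^1$ path in $\mathcal R$ from $a$ to $x$ (such a path exists because an open connected subset of $\Real^3$ is polygonally connected). The central step is to show this definition is unambiguous, i.e.\ that $\oint_\sigma h\cdot\mathrm{d}\rvec = 0$ for every piecewise-$\mathcal C^1$ loop $\sigma$ in $\mathcal R$: since $\mathcal R$ is simply connected, $\sigma$ is null-homotopic, and after a smoothing argument one spans $\sigma$ by a piecewise-$\mathcal C^1$ oriented surface $S\subset\mathcal R$ with $\partial S = \sigma$, whereupon Stokes' theorem gives $\oint_\sigma h\cdot\mathrm{d}\rvec = \iint_S (\vgrad\times h)\cdot\mathrm{d}S = 0$. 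Once well-definedness is in hand, I would compute, for a coordinate direction $e_i$ and small $t$, the increment $\varphi(x+te_i)-\varphi(x)$ by appending the straight segment $[x,x+te_i]\subset\mathcal R$ to a fixed path ending at $x$; this increment equals $\int_0^t h_i(x+se_i)\,\mathrm{d}s$, so differentiating at $t=0$ and using continuity of $h_i$ yields $\partial_{x_i}\varphi(x) = h_i(x)$. Therefore $\vgrad\varphi = h$, and since $h\in\mathcal C^1$ its components — which are the first partials of $\varphi$ — are $\mathcal C^1$, so $\varphi\in\mathcal C^2$, as required.

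The hard part will be the path-independence step: converting the topological hypothesis (every loop is null-homotopic through \emph{continuous} maps) into the existence of a piecewise-smooth spanning surface inside $\mathcal R$ on which Stokes' theorem is literally valid. This is usually handled by mollifying or triangulating the homotopy while keeping its image in the open set $\mathcal R$, and checking that the line integral is insensitive to such approximation — routine in spirit but technical. An alternative that sidesteps surfaces entirely is to prove directly that the line integral of a $\mathcal C^1$ closed form is invariant under homotopies of paths with fixed endpoints, using only Green's theorem on small rectangles in the parameter domain of the homotopy.
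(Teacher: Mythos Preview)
Your argument is the standard textbook proof and is correct in outline: Clairaut for the forward direction, and for the converse, path-independence via Stokes' theorem on a spanning surface (or the homotopy-invariance alternative you mention), followed by the fundamental-theorem-of-calculus computation along coordinate segments to recover $\vgrad\varphi = h$.

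However, there is nothing to compare against here: the paper does \emph{not} prove Theorem~\ref{thm:1}. It is quoted verbatim as a known result from a vector calculus text (the citation \cite[pp.~444]{vectorCalculusBook} precedes the statement), and the paper immediately moves on to use it as a black box motivating the architecture $\hat y = \vgrad\mathcal D_y\circ\mathcal E(x,u)$. So your proposal is not an alternative to the paper's proof --- it is simply a proof where the paper supplies none. If your goal is to make the manuscript self-contained, what you have written is appropriate; if the goal is to match the paper, the correct move is to cite the result and omit the proof entirely.
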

This fact implies that if we want to model a vector-field $h$ that is curl-free using a neural network, a reasonable strategy is to learn a scalar function $\varphi(x)$, whose gradient $\vgrad \varphi$ will yield the desired curl-free vector field.
Since modern deep-learning toolkits are equipped with capable automatic differentiation (AD) modules, computing $y$ from $\varphi$ is straightforward as long as the components of the state $x$ corresponding to Cartesian coordinates $[p_X, p_Y, p_Z]\subset x$ are known. If so, computing $\vgrad \varphi$ merely involves executing an automatic differentiation with respect to these components. This motivates the design of the output path (i.e., $\mathcal P_y = \vgrad$) in our proposed physics-constrained NSSM. To reiterate, the estimate of the output function $h$, guaranteed by Theorem~\ref{thm:1} to be curl-free, is 
\begin{equation}\label{eq:h_bar}
\bar h(x,u)=\vgrad\dc_y\circ\ec(x, u).
\end{equation}

In order to represent non-constant $h$, one requires the output of the neural network to exist, and be non-constant. This motivates the use of activation functions that are smooth, with non-constant derivatives. As seen in  \eqn{jac_defs}, we use the Jacobian of the NSSM for model-based state estimation, motivating not only first-order derivatives to be smooth, but second-order derivatives also. This motivates the use of the \texttt{swish} activation function~\cite{ramachandran2017swish}, 
\begin{equation}\label{eq:swish}
\varsigma(z) = \frac{z}{1 + \exp(-\beta z)},
\end{equation}
which is non-monotonic, smooth, and bounded only from below: see~\fig{swish}; usually $\beta$ is a trainable parameter, but we set $\beta=1$ for simplicity. 
The \texttt{swish} function avoids output saturation since it is unbounded from above, thereby minimizing near-zero gradients for large values similar to the ReLU function.
Also, the \texttt{swish} activation exhibits a soft self-gating property that does not impede the gradient flow  due to its smoothness and non-monotonicity, thereby stabilizing the training procedure~\cite{ramachandran2017swish}.
Next, we show some properties of the \texttt{swish} activation functions that are critical to our task. We begin with the following lemma.
\begin{lemma}\label{lem:1}
Let $\mathcal P_1$ and $\mathcal P_2$ be polynomials in $z$ and $e^z$, respectively. Then 
\begin{equation}\label{eq:lem1}
\frac{d}{dz}\big(\mathcal P_1\otimes \mathcal P_2\big) = \sum_{i\in \mathcal I} c_i z^{q_i} e^{\tilde q_i z} =: \sum_{i\in \mathcal I} c_i \tilde P(z, e^z),
\end{equation}
where $\otimes$ is the Kronecker product, $\mathcal I$ is some index set, and $c_i, q_i, \tilde q_i$ are scalar coefficients.
\end{lemma}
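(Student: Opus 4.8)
The plan is to reduce the statement to a single structural observation: the real vector space
$$V := \mathrm{span}_{\mathbb R}\big\{\, z^q e^{\tilde q z} \;:\; q,\tilde q \in \mathbb Z_{\ge 0}\,\big\}$$
of ``exponential monomials'' is closed under $\tfrac{d}{dz}$, and $\mathcal P_1 \otimes \mathcal P_2$ lies in $V$. Once this is in place, \eqref{eq:lem1} is immediate.

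First I would unwind the notation. Writing $\mathcal P_1(z) = \sum_{j=0}^{n} a_j z^j$ and $\mathcal P_2(e^z) = \sum_{k=0}^{m} b_k e^{kz}$, the object $\mathcal P_1 \otimes \mathcal P_2$ is — up to the ordering convention of the Kronecker product — the finite family, equivalently the finite sum, of entries $a_j b_k\, z^j e^{kz}$ over $0 \le j \le n$ and $0 \le k \le m$. In particular it is a finite $\mathbb R$-linear combination $\sum_{(j,k)} c_{jk}\, z^j e^{kz}$, hence an element of $V$. Next, by linearity of differentiation it suffices to differentiate one monomial, and the product rule gives
\begin{equation*}
\frac{d}{dz}\big(z^q e^{\tilde q z}\big) = q\, z^{q-1} e^{\tilde q z} + \tilde q\, z^{q} e^{\tilde q z},
\end{equation*}
again a linear combination of exponential monomials, hence in $V$ (the first term simply drops out when $q = 0$, and all exponents stay nonnegative integers). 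Applying this termwise to $\mathcal P_1 \otimes \mathcal P_2$ and collecting, I obtain $\tfrac{d}{dz}(\mathcal P_1 \otimes \mathcal P_2) = \sum_{i \in \mathcal I} c_i\, z^{q_i} e^{\tilde q_i z}$ for a suitable finite index set $\mathcal I$ and scalars $c_i, q_i, \tilde q_i$, which is exactly \eqref{eq:lem1}; no attempt to combine like terms is needed since the statement only asserts existence of such a representation.

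There is essentially no analytic obstacle here — the only thing requiring care is bookkeeping: fixing the precise meaning of $\otimes$ and tracking how the product rule shifts exponents. It is worth remarking, for use in the results that follow, that the same computation shows $V$ is in fact a differential subalgebra of $C^\infty(\mathbb R)$ (closed under products as well, since $z^{q_1}e^{\tilde q_1 z}\cdot z^{q_2}e^{\tilde q_2 z} = z^{q_1+q_2}e^{(\tilde q_1+\tilde q_2)z}$), so iterating Lemma~\ref{lem:1} shows that \emph{every} derivative of $\mathcal P_1 \otimes \mathcal P_2$ has the form on the right-hand side of \eqref{eq:lem1}.
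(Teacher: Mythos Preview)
Your proof is correct and essentially identical to the paper's own argument: both expand $\mathcal P_1\otimes\mathcal P_2$ as a finite sum of terms $c_{jk}\,z^j e^{kz}$, differentiate each term via the product rule to get $e^{kz}(j z^{j-1}+k z^j)$, and observe that the result is again a linear combination of exponential monomials. The only difference is that you package the observation as ``$V$ is closed under $d/dz$,'' which is a clean way to say the same thing.
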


\begin{proof}
Since $\mathcal P_1(z) = \sum_i c_{1,i} z^i$ and $\mathcal P_2(e^z) = \sum_{j} c_{2,j} e^{jz}$, for some set of coefficient vectors $c_1$ and $c_2$, their tensor product can be written as
$\sum_{i} \sum_{j} c_{1,i} c_{2,j} z^i e^{jz}$. 

Applying the chain rule yields the derivative
\begin{align*}
\frac{d}{dz}\big(\mathcal P_1\otimes \mathcal P_2\big) 
&=  \sum_{i} \sum_{j} c_{1,i} c_{2,j} e^{jz} \left(i z^{i-1}+ j z^i \right),
\end{align*}
whose terms can be rearranged to yield~\eqref{eq:lem1}.
\end{proof}

Next, we show that the $n$-th derivative of the \texttt{swish} activation has an alternative representation that enables us to infer its smoothness properties.
\begin{theorem}\label{thm:2}
Let $\varsigma(z)$ be defined as in~\eqref{eq:swish}. For any $n\in \mathbb N$, the $n$-th derivative of $\sigma$ can be written as
\begin{equation}
\varsigma^{(n)}(z) = \frac{\sum_i c_i^{(n)} \left( \mathcal P^{(n)}_1(z)\otimes \mathcal P^{(n)}_2(e^z)\right)}{(1+e^z)^{n+1}},    
\end{equation}
where $\mathcal P^{(n)}_1$ and $\mathcal P^{(n)}_2$ are polynomials in $z$ and $\exp(z)$, respectively and $c_i$ are scalar coefficients.
\end{theorem}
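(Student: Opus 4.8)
The plan is to induct on $n$. First I would rewrite the \texttt{swish} function so that its denominator matches the claimed shape: multiplying numerator and denominator of $\varsigma(z)=z/(1+e^{-z})$ by $e^z$ gives $\varsigma(z)=z e^z/(1+e^z)$, which is exactly the asserted form for $n=0$, with $\mathcal P_1^{(0)}(z)=z$, $\mathcal P_2^{(0)}(t)=t$, and a single coefficient $c_1^{(0)}=1$. It is convenient to introduce the function class $\mathcal A$ consisting of all finite $\mathbb R$-linear combinations of monomials $z^{q}e^{kz}$ with $q$ and $k$ nonnegative integers. Any expression $\sum_i c_i\,(\mathcal P_1(z)\otimes\mathcal P_2(e^z))$ expands (exactly as in the proof of Lemma~\ref{lem:1}) into such a combination, and conversely every element of $\mathcal A$ is trivially of that grouped form by taking $\mathcal P_1,\mathcal P_2$ to be monomials. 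Hence it suffices to show that $\varsigma^{(n)}(z)=R_n(z)/(1+e^z)^{n+1}$ for some $R_n\in\mathcal A$.

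For the inductive step, assume $\varsigma^{(n)}(z)=R_n(z)/(1+e^z)^{n+1}$ with $R_n\in\mathcal A$. Differentiating with the quotient rule and cancelling a common factor $(1+e^z)^n$ from numerator and denominator yields
\begin{equation*}
\varsigma^{(n+1)}(z)=\frac{R_n'(z)\,(1+e^z)-(n+1)\,e^z\,R_n(z)}{(1+e^z)^{n+2}},
\end{equation*}
so the only thing to verify is that $R_{n+1}(z):=R_n'(z)(1+e^z)-(n+1)e^z R_n(z)$ again lies in $\mathcal A$. This follows from three closure properties of $\mathcal A$: it is closed under differentiation, which is precisely Lemma~\ref{lem:1} applied termwise and extended to finite sums by linearity (note $\frac{d}{dz}(z^{q}e^{kz})=q z^{q-1}e^{kz}+k z^{q}e^{kz}$, so no negative powers arise); it is closed under multiplication by $e^z$, which merely shifts each exponent $k\mapsto k+1$, still nonnegative; and it is consequently closed under multiplication by $1+e^z$ and under $\mathbb R$-linear combinations. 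Therefore $R_{n+1}\in\mathcal A$, and regrouping its monomials into Kronecker-product form completes the induction.

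The argument is essentially algebraic bookkeeping, so I do not expect a genuine obstacle. The one place needing care is the cancellation of exactly the factor $(1+e^z)^n$ in the inductive step, which is what keeps the denominator exponent at $n+1$ rather than letting it blow up; together with the observation that the monomial description of $\mathcal A$ never produces negative powers of $e^z$ (so the constituent factors really are \emph{polynomials} in $e^z$), this is what makes the stated form stable under differentiation. For a general trainable $\beta$ the same induction goes through after rescaling, with $e^z$ replaced by $e^{\beta z}$ and an extra factor $\beta$ collected at each differentiation; this is omitted here since we fix $\beta=1$.
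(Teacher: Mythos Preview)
Your proof is correct and follows essentially the same inductive approach as the paper's own proof: both differentiate $R_n(z)/(1+e^z)^{n+1}$ via the quotient rule and invoke Lemma~\ref{lem:1} to conclude that the numerator remains a finite sum of monomials $z^{q}e^{kz}$. Your packaging via the closed class $\mathcal A$ and the choice of base case $n=0$ (the paper starts at $n=1$ with an explicit computation of $\varsigma'$) are slightly cleaner, but the route is the same.
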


\begin{proof}
We prove this by induction. For $n=1$, we calculate
\[
\varsigma^{(1)}(z) = \frac{(z+1)e^z + e^{2z}}{(1+e^z)^2}.
\]
Therefore, $\mathcal P^{(1)}_1 = [1, z]^\top$ and $\mathcal P^{(1)}_2 = [e^z, e^{2z}]^\top$, with $c^{(1)}=[1,1,1,0]$.

Now, we assume that the statement holds for $n$, and it remains to be shown to hold for $n+1$. To this end, we differentiate $\varsigma^{(n)}$, which yields

\begin{align*}
\varsigma^{(n+1)}(z) &= -(n+1) e^z\frac{\sum_i c_i^{(n)} \left( \mathcal P^{(n)}_1(z)\otimes \mathcal P^{(n)}_2(e^z)\right)}{(1+e^z)^{n+2}}  + \frac{\sum_i c_i^{(n)} \frac{d}{dz}\left( \mathcal P^{(n)}_1(z)\otimes \mathcal P^{(n)}_2(e^z)\right)}{(1+e^z)^{n+1}}, \\
&= \frac{-(n+1)e^z\sum_i c_i^{(n)} \left( \mathcal P^{(n)}_1\otimes \mathcal P^{(n)}_2\right) + (1+e^z)\sum_{\mathcal I} \tilde c_i \tilde{\mathcal P}(z,e^z)}{(1+e^z)^{n+2}},
\end{align*}
where $\tilde c_i \tilde{\mathcal P}(z, e^z)$ is obtained using Lemma~\ref{lem:1}. We observe that the numerator has introduced additional terms that contain higher-order terms of $e^z$ and $z$ along with their combinations. Therefore, by introducing new polynomials $\mathcal P^{(n+1)}_1(z) = [\mathcal P^{(n)}_1(z); z^{n+1}]$ and $\mathcal P^{(n+1)}_2(z) = [\mathcal P^{(n)}_2(e^z); e^{(n+1)z}]$ and choosing appropriate coefficients, one can rewrite the numerator in the previous expression as
\[
\frac{\sum_i c_i^{(n+1)} \left( \mathcal P^{(n+1)}_1(z)\otimes \mathcal P^{(n+1)}_2(e^z)\right)}{(1+e^z)^{n+2}},
\]
which affirms the $n+1$-case, and concludes the proof.
\end{proof}

From Theorem~\ref{thm:2}, we conclude that the $n$-th derivative of $\varsigma(\cdot)$ is composed of differentiable functions, which indicates that its derivatives exist, and are themselves smooth, ensuring the gradient operations to obtain $y$ from $\varphi$ are valid. Consequently, the output path of the NSSM is $n$-times differentiable, as it is composed of smooth functions passed through affine operators (defined by the weights and biases of the NSSM layers). This is useful not only for expressing more complex vector field constraints, but also imperative in the EKF context because one has to evaluate Jacobians of the NSSM as seen in \eqn{jac_defs}, which involves twice-differentiating the output path. 

Additionally, Theorem~\ref{thm:2} implies that, as long as some $c_i^{(n)}$'s are non-zero, the derivatives of the swish function are non-constant. This is also particularly useful since constant higher-order derivatives would not be useful for expressing non-constant vector fields. We hypothesize that as long as the trained weight matrices of $\ec$ and $\dc_y$ are non-zero, and not pathological in terms of dropping rank, one can expect that the output path $\vgrad\dc_y\circ\ec$ of the proposed neural architecture is non-constant if the NSSM uses swish activation functions. 
In fact, we can explicitly check that for $n\le 2$, the swish function and its derivatives are all smooth and non-constant; see~\fig{swish}.
Training the neural SSM involves minimizing the  differentiable training loss as in~\eqref{eq:nssm-loss-fn-2}.

\begin{remark}
Alternatives to the \texttt{swish} activation function include the \texttt{tanh}, \texttt{SELU}, and \texttt{mish} functions, all of which are smooth, and have non-constant first- and second-derivatives. Unlike \texttt{tanh}, \texttt{swish} is unbounded from above, which prevents vanishing gradients and therefore improves gradient flow during backpropagation through deeper network architectures.  Conversely, ReLU or LeakyReLU functions cannot be used as their 2nd-derivatives are zero.
\end{remark}

\section{Benchmarking and Ablations}
\label{sec:example}
\subsection{Example 1. The Bouc-Wen Hysteretic System Benchmark}
\label{sec:bw_example}

Our first example is the Bouc-Wen oscillator which exhibits hysteretic dynamics~\cite{noel2016hysteretic}; a well-known system identification benchmark. Code and data are available at: \texttt{github.com/merlresearch/MetaLIC}. The Bouc-Wen model is described by discretizing the continuous dynamics
\begin{subequations}
\label{eq:boucwen_model}
\begin{align}
\dot{x}_1 &= x_2, \quad
\dot{x}_2 = \frac{1}{m_L}(u - k_L x_1 -  c_L x_2 - x_3), \\
\dot{x}_3 &= \alpha x_2 - \beta\left(\gamma |x_2||x_3|^{\nu - 1}x_3 + \delta x_2 |x_3|^\nu\right), \\
y &= x_1 + w_y,
\end{align}
\end{subequations}
where $u$ is an input excitation acting on the oscillator, $z$ is the hysteretic force, and $y$ is the output displacement that can be measured in real-time. The noise $w_y$ is Gaussian white noise with a bandwidth of 375~Hz, and a standard deviation of $8\times 10^{-3}$~mm. The discretization sampling frequency is 750~Hz. The set of parameters, whose nominal values are provided in~\cite{noel2016hysteretic}, are unknown for our purposes, but assumed to lie within a range $[\Theta_{\min}, \Theta_{\max}]$  given by Table~\ref{tab:bw_params}. 
\begin{table}[!ht]
\caption{Target parameter $\theta^\star$ and ranges for Example~1.}
\label{tab:bw_params}
\centering\small
\begin{tabular}{c||ccccccc}
\toprule
$\theta:=$ & $m_L$ & $c_L$ & $k_L$ & $\alpha$ & $\beta$ & $\gamma$ & $\delta$ \\
\midrule
$\Theta_{\min}$ & 1 & 5 & $2.5\times 10^4$ & $2.5\times 10^4$ & $500$ & 0.5 & -1.5 \\
$\Theta_{\max}$ & 3 & 15 & $7.5\times 10^4$ & $7.5\times 10^4$ & 4500 & 0.9 & -0.5 \\
$\theta^\star:=$ & 2 & 10 &$5.0\times 10^4$ & $5.0\times 10^4$ & 1000 & 0.8 & -1.1\\
\bottomrule
\end{tabular} 
\end{table}

For the target system, parameterized by $\theta^\star$, the benchmark dataset contains a context and target set comprising 40960 samples and 8192 samples, respectively. These samples are noisy, and have been obtained using a multi-sine sweep; see~\cite{schoukens2020initialization} for more details on the data collection procedure. In order to test the performance of meta-learned system identification on this system, we extend this example to a family of oscillators by sampling $\Nsrc=40$ sets of parameters uniformly within the admissible range $\Theta$. These source systems are used for meta-training, and the source dataset is collected by simulating each of these systems for $T_\ell = 10^4$ time steps, with a sampling period of $1/750$s. The excitation to all source systems is $u(t) = 120\sin(2\pi t)$. We demonstrate in Supplementary Fig.~S2 that the source systems exhibit a wide range of hysteretic loops, and therefore, the meta-learning problem is well-posed, as the dynamics are similar, but not identical, across source systems. 

The NSSM architecture has the form~\eqref{eq:nssm_case1} since we have access to $(u, y)$ data. We set the backward window length of $H=20$. The encoder is 7 layers deep, activated by exponential linear unit (ELU) functions, with $[256, 128, 128, 64, 64, 32, 32]$ neurons across layers. The latent dimension $n_\psi=16$, and the state-transition operator $\mathcal A_\psi$ is fully connected 5 layers deep with 256 neurons each and ELU activation. The decoder structure is a mirror reflection of the encoder, and $\mathcal P$ is the identity operator, because no physical constraints are required to be enforced in the benchmark system. For the MAML algorithm, we set inner- and outer-loop learning rates to be $0.001$, the adaptation budget to be $M=40$ iterations, and run the algorithm for 10K epochs with a 80/20 validation split in the source dataset. With NVIDIA 3090X GPU acceleration, the full meta-training phase takes 16~hr. 
\begin{figure}[!ht]
\centering
\includegraphics[clip=true, width=.8\columnwidth]{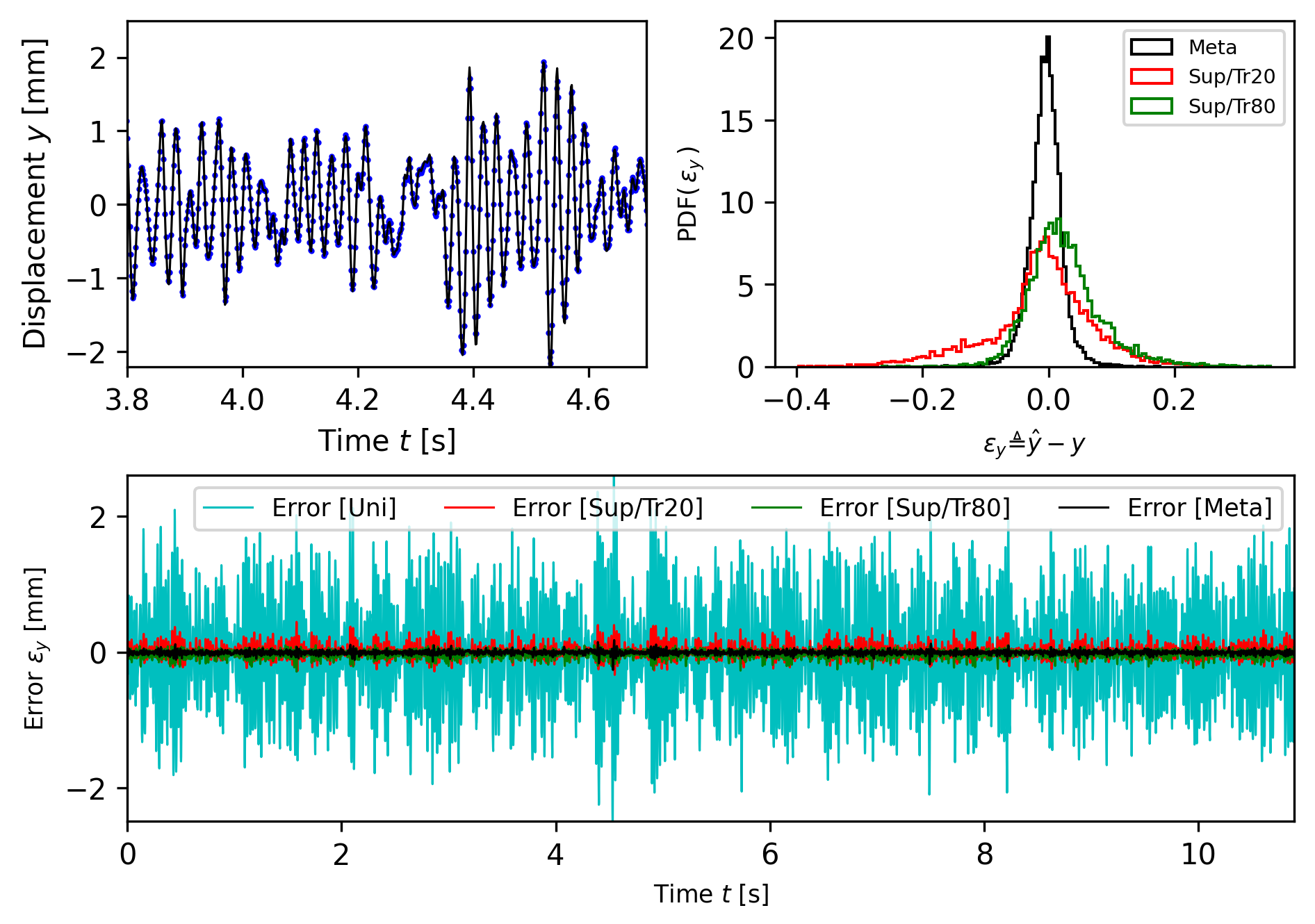}
\caption{Performance of meta-learning for Example 1.}
\label{fig:bw_performance}
\end{figure}

We present the performance of the meta-learned NSSM in Fig.~\ref{fig:bw_performance}. The top left subplot shows that the final MAML-NSSM predictions (blue circles) are overlapping with the ground truth data (black line) for a subset of time where the Bouc-Wen system transitions between two magnitudes of oscillation, between [-1,1] within 3.8--4.2~s, and between [-2,2] within 4.8--5.2~s. Both phases of oscillatory response are reproduced with small predictive error, as evident by the top-right subplot which shows a distribution of the prediction error for three algorithms including MAML. Clearly, meta-learned NSSM errors are tight around 0 with a 99\% confidence interval of $[-0.1, 0.1]$. This is further supported by the lower subplot, which shows the error signal evolution over time (black line) is very close to zero most of the time. 

\begin{table}[!ht]
    \small   
     \caption{\underline{Mean} performance comparison of meta-learned NSSM with supervised learning.}
    \label{tab:bw_results}
    \centering
    \begin{tabular}{l|cccc}
     & \textsc{Meta} & \textsc{Univ} & \textsc{Sup/Tr20} & \textsc{Sup/Tr80} \\
    \hline 
    RMSE [$\times 10^{-5}$ mm] & $\mathbf{2.83}$ & $72.35$ & $8.41$ & $6.50$ \\
    Fit [\%] & $\mathbf{95.47}$ & $29.12$ & $86.76$ & $90.67$    
     \end{tabular}
\end{table}
As reported in Table~\ref{tab:bw_results}, we also compare against: (i) a Universal NSSM, which is trained on all the data, both from the source and target systems, along with greedy supervised learned NSSM with just the target data seeing (ii) 80\% of the target data, called \textsc{Tr80}, or (iii) 20\% of the target data, called \textsc{Tr20}. The objective of comparing against \textsc{Tr80} is to demonstrate that MAML-NSSM can outperform even when almost the entire dataset is available for training, because the target dataset itself is of limited size. The \textsc{Tr20} case is to compare performance when the same amount is given for adaptation/learning, to demonstrate the benefit of using commonalities from similar systems versus learning from scratch. We evaluate these models using the goodness-of-fit and root-mean-squared-error\footnote{We define these formally as
% Fit metric
\[
    \textsf{Fit}(\%) = 100 \left(1 - \frac{\|y - \hat{y}\|}{\|y - \bar{y}\|}\right), \;
    \textsf{RMSE} = \sqrt{\frac{1}{N} \sum_{i=1}^{N} (y_i - \hat{y}_i)^2},
\]
where \( y_i \) is the true value, \( \hat{y}_i \) is the predicted value, and \( N \) is the total number of samples.}; mean metrics are presented over 20 independent runs. Clearly, MAML outperforms all 3 ablations, with the closest competitor being \textsc{Tr80}, as expected, because it has access to the most amount of target data. The universal model performs the worst, as it is unable to capture the dynamics of the target system well, due to the large variability in the source systems.

Furthermore, we compare the performance of various meta-learning algorithms for this benchmark problem, including FO-MAML~\cite{finn2017model}, iMAML~\cite{rajeswaran2019meta}, and Reptile~\cite{nichol2018reptile}. As this is a well-explored problem in the nonlinear system identification literature, we also compare against the dynoNet algorithm~\cite{forgione2021a}, the polynomial-basis algorithm proposed in~\cite{esfahani2017polynomial} specifically for hysteretic systems, and the method in~\cite{belz2017automatic} that uses  nonlinear finite impulse response (NFIR) models, nonlinear auto regressive with exogenous
input models (NARX); and nonlinear orthornormal basis function (NOBF). We only present RMSE comparisons in Table~\ref{tab:bw_benchmarks} as those are explicitly reported in all the competitor papers. 

\begin{table}[!ht]
\small
    \caption{Mean RMSE comparison of meta-learning algorithms for the Bouc-Wen system.}
    \label{tab:bw_benchmarks}
    \centering
    \begin{tabular}{l|c|l|c}
    Algorithm & RMSE [mm] &Algorithm & RMSE [mm] \\
    \hline 
    \textsc{MAML}~\cite{finn2017model}  & $2.53\times 10^{-5} $&
    \textsc{FO-MAML}~\cite{finn2017model} & $2.74\times 10^{-5} $\\
    \textsc{iMAML}~\cite{rajeswaran2019meta} & $2.97\times 10^{-5}$ &
    \textsc{Reptile}~\cite{nichol2018reptile} & $2.66\times 10^{-5} $\\
    \textsc{dynoNet}~\cite{forgione2021dynonet} & $4.52\times 10^{-5}$&
    \textsc{Polynomial}~\cite{esfahani2017polynomial}  & $1.87\times 10^{-5}$\\
    \textsc{NFIR}~\cite{belz2017automatic} & $16.4\times 10^{-5}$ & \textsc{NARX}~\cite{belz2017automatic} & $17.3\times 10^{-5}$
    \end{tabular}
\end{table}

From the table, we deduce that the expressivity of neural SSMs in general is higher than NFIR and NARX for this system, with the exception of the method of~\cite{esfahani2017polynomial}, which exhibits excellent performance. This is because the polynomial basis method is tailored specifically to the Bouc-Wen (and generally, hysteretic) systems and, more importantly, their results are reported for the noise-free case $w_y\equiv 0$. Overall, MAML performs well on this problem, even though our modeling paradigm is generalizable to a wide class of systems. The variants of MAML (iMAML, first-order-MAML and Reptile) perform comparably, although we have found empirically that iMAML hyperparameters are  harder to choose than the others, and iMAML performance is quite sensitive to this selection.

\subsection{Example 2: Investigating Subnetwork Fine-Tuning with ANIL on Chaotic Oscillators}
We validate our meta-learned neural SSM on a family of parameter-uncertain unforced van der Pol oscillators, where each oscillator is given by the continuous dynamics
\begin{align}
\label{eq:vdp}
\dot x_1 &= x_2, \quad \dot x_2 = \theta x_2 (1-x_1^2) - x_1, \quad y= x,
\end{align}
which are discretized by a forward-Euler method with a sampling period of $0.01$s. 

\subsubsection{Data collection and implementation details}
The source and target systems are generated by sampling $\theta$ uniformly from the (unknown) range $\Theta=\mathcal U\big([0.5, 2]\big)$, which induces a broad range of dynamics since $\theta$ is effectively a damping parameter. In particular, we simulate until $T=20$ and collect output data for $N_s=200$ source systems with unique $\theta\sim\Theta$ values for each system. The target system $\theta^\star=1.572$, which is also unknown. For each of the source systems, the initial state is randomly sampled from $\mathcal U\big([-1, 1]^2\big)$, while the target system initial state is fixed at $[1, -0.5]^\top$; the final time is sampled randomly from $T\sim \mathcal U([10, 40])$~s, with the sampling period kept constant for all systems. The past window length $H=10$ and predictive window length $H_p=5$. 

Our neural SSM has an encoder consisting of an input layer that takes $x$, and passes it through 5 hidden layers, each of which has 128 neurons, to an output layer that generates a latent variable of dimension $n_z=128$. The entire network uses rectified linear units (ReLUs) for activation. The deep neural network is implemented entirely in \texttt{PyTorch}~\cite{pytorch}.  The batch size $B=32$, with the number of inner-loop iterations fixed at $m=10$, and the total number of meta-training epochs is set to $10^4$. The step-sizes for MAML are $0.01$ and $0.001$ for the inner and outer loops, respectively. Online, the MAML is allowed $m=40$ adaptation steps.
\begin{figure}[!ht]
\centering
\includegraphics[clip, width=.8\columnwidth]{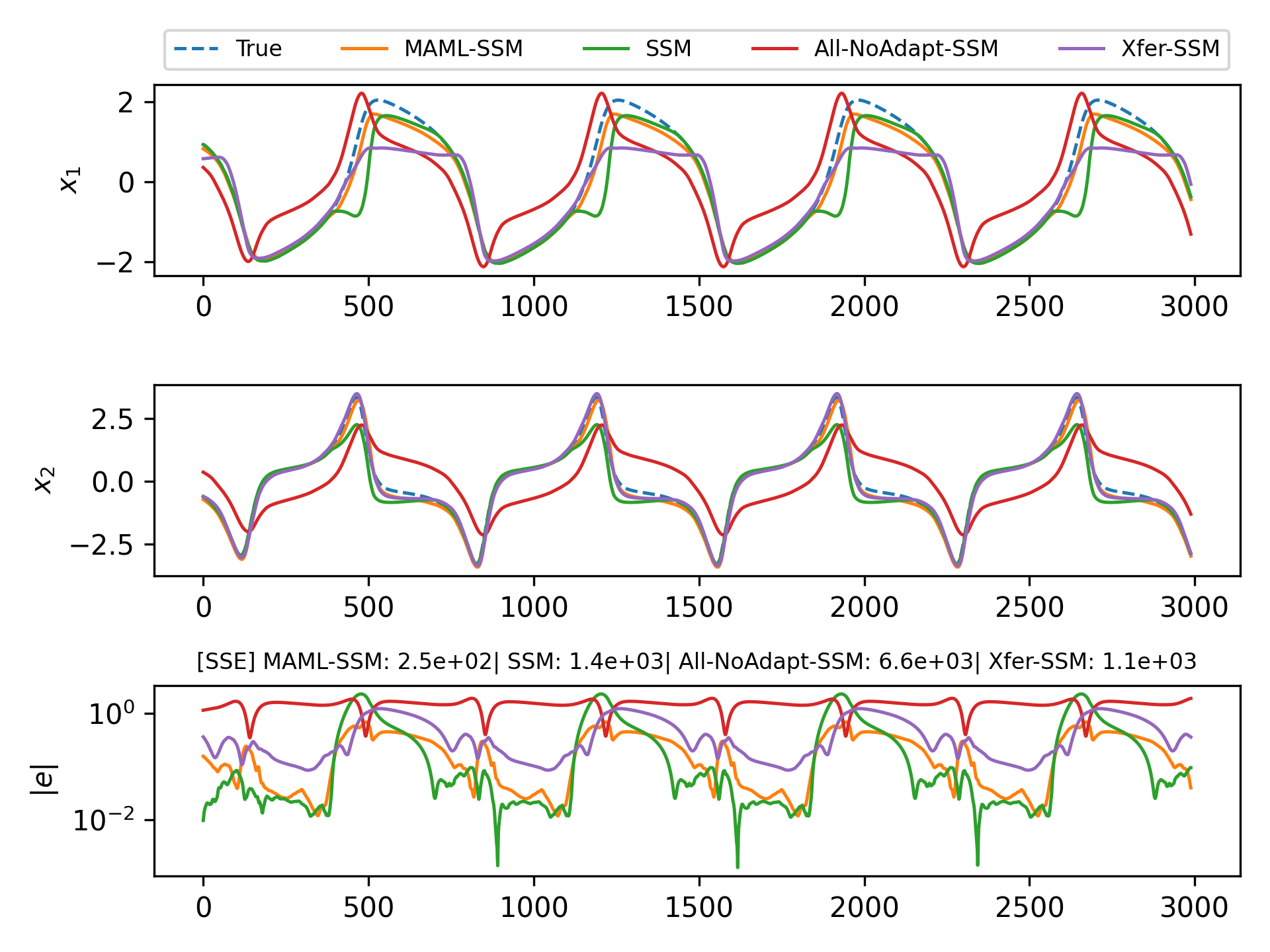}
\caption{Comparison of MAML-SSM with baselines. (\textit{upper, middle}) State $x_1$ and $x_2$ of the oscillator. (\textit{lower}) Comparison of sum-squared-error (SSE).}
\label{fig:baselines}
\end{figure}
\subsubsection{Meta-learning vs. supervised and transfer-learning}
To judge the effectiveness of the proposed \texttt{MAML-SSM}, we compare against a few baselines, all of which have the same architecture as MAML-SSM but are trained differently, using a single training loop rather than an inner-outer bilevel loop as in MAML. These include:
 \texttt{SSM}, which is trained by supervised learning using only the target system data: this is the classical one-training-loop approach for learning neural SSMs. The supervised neural \texttt{SSM} is allowed 10$\times$ more steps to train;  \texttt{All-NoAdapt-SSM}, which is training by supervised learning using all the source system and target system data, without allowing online adaptation steps; \texttt{Xfer-SSM}, which is trained by transfer learning; that is, supervised learning on all the source system data, and allowing a few online adaptation steps exploiting the target system data. The same number of adaptation steps are allowed for \texttt{MAML-SSM} and \texttt{Xfer-SSM}.
To provide some intuition, the first baseline \texttt{SSM} is selected to understand whether \texttt{MAML-SSM} can enable learning from similar systems, or whether using data from other systems negatively affects predictive quality. The second baseline \texttt{All-NoAdapt-SSM} is selected to study whether supervised learning would be enough to learn a good predictive model if it was provided all the data available, both from source and target systems. The third baseline \texttt{Xfer-SSM} is to test whether meta-learning can outperform transfer-learning in few-shot, data-poor situations.

The performance of \texttt{MAML-SSM} and the baselines is illustrated in Fig.~\ref{fig:baselines}. At online inference, the context set is generated by using data from the first 400 time steps, and the target set to be predicted by the neural SSM is the next 3000 time steps. The top and middle subplots show the evolution of the states of the system, with the dashed line denoting the true query system states. The lowest subplot shows the evolution of the sum-squared-error (SSE) between the true state and the predicted state over the 3000 time steps. It is immediately clear that \texttt{MAML-SSM} incurs the smallest prediction SSE and outperforms the baseline, with the transfer learning \texttt{Xfer-SSM} performing second best with double the SSE, closely followed by \texttt{SSM}, and with greedily using all the data as in \texttt{All-NoAdapt-SSM} exhibiting significantly worse performance uniformly across time. From the $x_1$ subplot, we can reason that \texttt{SSM} most likely overfits the context data, and therefore, it exhibits excellent predictive accuracy in within-training-set points of time, with marked deterioration at out-of-set points. Conversely, \texttt{All-NoAdapt-SSM} severely underfits the data, which is plausible, as it inherently tries to find a set of neural SSM weights to fit the `average' dynamics induced by the source and query systems, rather than adapting to the query system. As expected, meta- and transfer-learning exhibit good performance, but our proposed \texttt{MAML-SSM} outperforms \texttt{Xfer-SSM}. The paucity of query system data and the few number of adaptation iterations require the trained learner (before adaptation) to have a set of neural weights that can rapidly adapt to the query system: since this is explicitly what MAML is trained for, the \texttt{MAML-SSM} has the ability to rapidly learn a good predictive model for the given example. In contrast, the transfer learning approach (before adaptation) is not explicitly trained keeping the adaptation in mind, therefore the set of initial weights is likely tailored to generating good predictions for the family of source systems, and the few-shot nature of the adaptation is not enough to enable \texttt{Xfer-SSM} to learn a model as accurate as \texttt{MAML-SSM}; this is especially notable from the $x_1$ plot, where transfer-learning completely fails to capture the positive half of the oscillatory dynamics.

\subsubsection{Subnetwork Fine-Tuning with ANIL}
ANIL can be used to partially meta-learn neural SSMs by splitting the neural SSM into a base-learner (layers with parameters $\omega_l \notin \omega_{\sf in}$) and a meta-learner (layers with parameters $\omega_l \in \omega_{\sf in}$), wherein only the meta-learner is adapted online; the base-learner is fixed. While we understand that the contribution of the encoder $\mathcal{E}$ is to lift or project the inputs to a latent space which provides benefits in terms of computational tractability or expressivity, and the role of the state-space matrix is to encode the dynamics in the latent space, it is not immediately clear which should be the base-learner and which layers should be adapted. 

\begin{figure}[!ht]
\centering
\includegraphics[width=.9\columnwidth, clip]{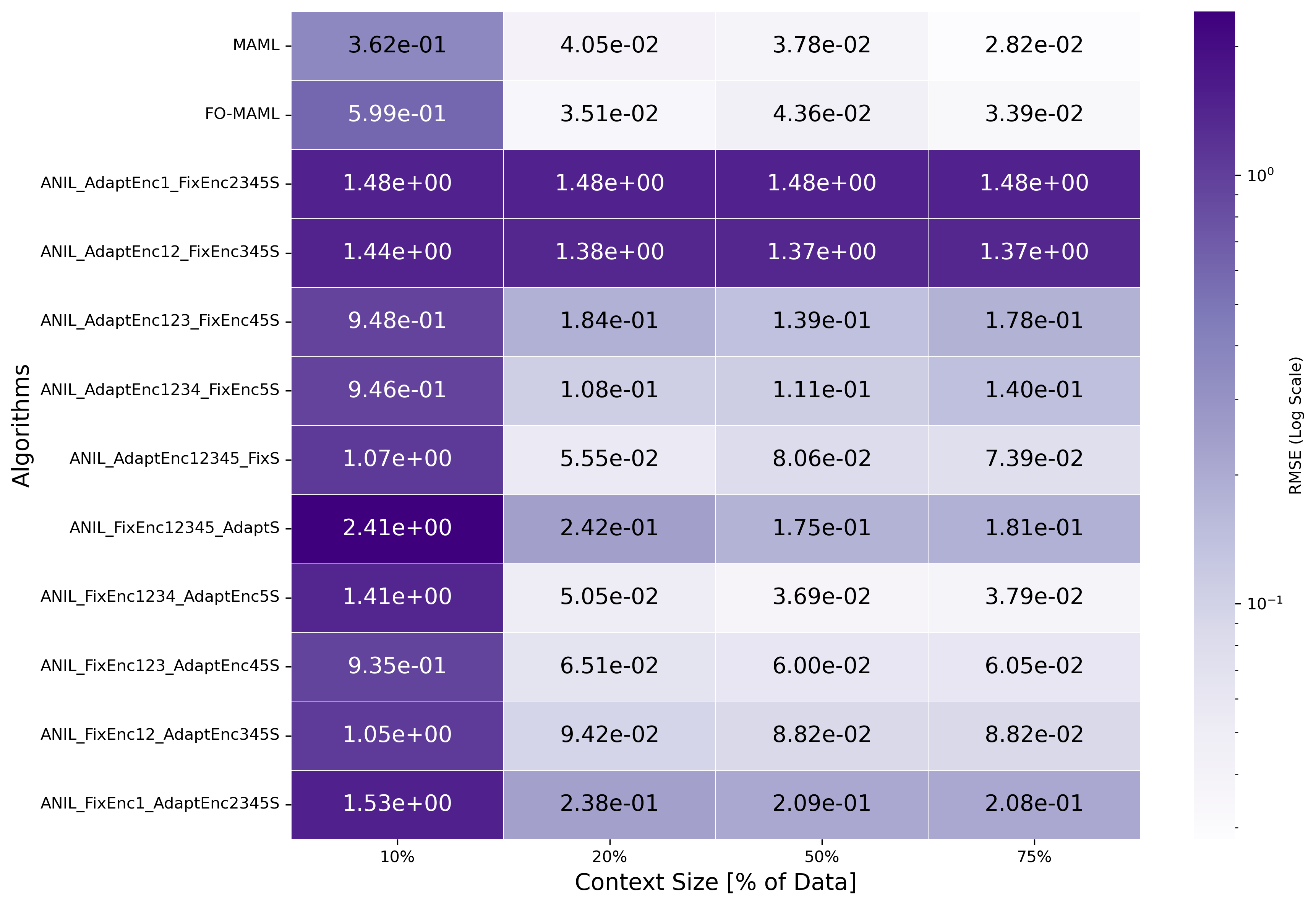}
\caption{RMSE (averaged over 20 runs) heatmap produced by tuning subnetworks with ANIL.}
\label{fig:anil}
\end{figure}
To answer this question, we use the target system with a fixed final time $10$s, and allow various subnetworks of the model to behave as the base-learner, and other layers to adapt. The decoders are always allowed to adapt in order to maintain consistency with the adapted encoder, i.e. so they can retain reconstructive ability. The experiment results are shown in Fig.~\ref{fig:anil}, where the columns of the table indicate what percentage of the testing data was used for adaptation, the colorbar represents the color-code for the RMSE (lighter colors indicate better accuracy of the NSSM), and the rows indicate which meta-learning algorithms were considered. Of course, MAML and first-order (FO)-MAML allow for full fine-tuning, i.e. all layers are adapted. For the last 10 rows, we report the results of ANIL using the following nomenclature: \texttt{ANIL\_FixEnc123\_AdaptEnc45S} to indicate that layers \texttt{1,2,3} of the encoder are fixed (\texttt{1} being nearest to the input channels) and only the \texttt{4,5} layers of the encoder along with the state-transition layers $\texttt{S}=\mathcal{A}_\psi$ are adapted. Similarly, \texttt{ANIL\_AdaptEnc12345\_FixEncS} indicates that all encoder layers were adapted and the state-transition layers were fixed. Clearly, from Fig.~\ref{fig:anil}, we can see that the best performance is for MAML and FO-MAML, which allows full fine-tuning. This is to be expected, as this allows for the most expressive adaptation. Interestingly, we see that the worst performance is obtained when the encoder is mostly fixed and only the state-transition layers are allowed to be adapted. This indicates that it is most critical to allow the encoder to learn and adapt the lifting/projection function that determines the latent space rather than just the state-transition matrix. Thinking about this in another way, it is not enough for just $\mathcal A_\psi$ to be adapted while fixing most of $\mathcal E$ if we are to learn across similar systems with some variety in the dynamics. 

It is also important to note that even though meta-learning is effective for few-shot learning, there is a practical lower bound in terms of the amount of data that is enough for a given problem: here, 10\% target data is too small for meta-inference and consistently gives poor results, whereas anything more than 10\% yields decent performance, although the best performance overall is with 50\% or more data for most high-performing algorithms. Of course, since we report the mean in the colortable, it is possible that there is slightly higher error when more data is provided, but the discrepancy is always small and most likely due to random seeds. However, one reason why adding more data might be resulting in worsened performance without full fine-tuning is due to overfitting, e.g. in the cases of \texttt{ANIL\_AdaptEnc123\_Fix45S} and  \texttt{ANIL\_AdaptEnc1234\_Fix5S}. The best performance is reported  when the first few layers of the encoder is fixed and the final layers are adapted along with adaptation of $\mathcal A_\psi$. This can be defended by our knowledge of deep neural network training, wherein the early layers are responsible for extracting basic, general features from the input data, such as common features and and simpler transformations, whereas later layers focus on more complex, system-specific transformations~\cite{yosinski2014transferable}.

\section{Case Studies with Constrained Physics}
\label{sec:case_studies}

We study two real-world systems, detailing the challenges encountered in these applications and how they benefit from the proposed approach, along with implementation details.

\subsection{Vapor Compression with Unknown Refrigerant Mass}
\label{sub:num_VCS}
Vapor compression systems (VCSs) are crucial in various heat transfer applications due to their versatility and wide range of operating temperature. As illustrated in \fig{vcs_overview_diag}[A], a VCS consists of a variable-speed compressor, a variable-position electronic expansion valve (EEV), two heat exchangers (condenser and evaporator), and two fans that force the air to flow through the heat exchangers. This system transfers heat from the air passing through the evaporator to the air passing condenser via the refrigerant flowing through the system. 
\begin{figure}[!ht]
    \centering
    \includegraphics[width=\columnwidth,clip=true]{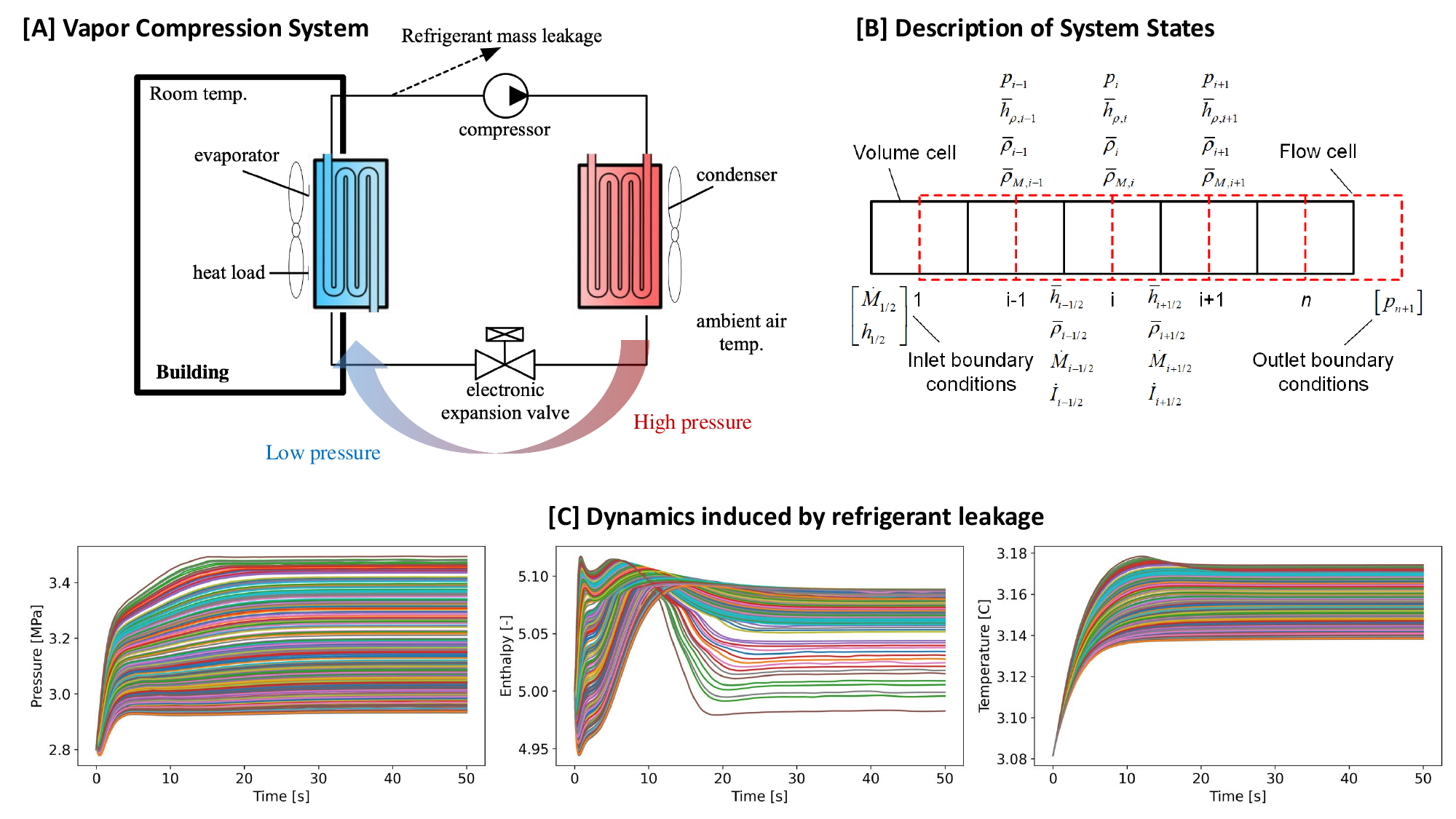}
    \caption{Range of HVAC dynamics induced by variable refrigerant mass.}
    \label{fig:vcs_overview_diag}
\end{figure}

Many technological solutions for VCSs use simulation models and digital twins~\cite{vering2021digitalTwin,laugman2023dt} for predicting their behavior. Accurate predictions of VCSs dynamics are needed for control~\cite{bortoff2019a}, estimation~\cite{chakrabarty2021scalable,deshpande2023MCEKSifac, deshpande2022Constrained}, and performance optimization~\cite{khosravi2019controller,chakrabarty2021accelerating}.
High-fidelity models of VCSs are generally derived based on the physics of the individual components in VCS and the overall system~\cite{qiao2015}. Developing physics-based models of these systems require equation-oriented programming environment and face several computational challenges, even when simplified and combined with efficient data-driven learning modules~\cite{chinchilla2023residual}. While physics-oriented models are necessary for accurate system predictions, their computational complexity becomes a bottleneck for efficient deployment, where rapid forward simulation and frequent model linearization is often required. Therefore, \textit{computationally efficient surrogates} of high-fidelity physics-based models of VCSs which mirror the behavior of underlying complex physical system are critical for analysis and design.

Vapor compression systems of similar technical specifications exhibit similar dynamic behaviors since the governing physics are identical.  
However, each system may exhibit a unique dynamic response due to factors such as unit-to-unit manufacturing variability, differences in installation configurations,
and differences in refrigerant quantity present in the system. 
For the purpose of this study, we particularly focus on the dynamics variability induced by the varying refrigerant quantity across different systems which may arise due to installation errors, refrigerant leakage over time, or refrigerant replenishment during maintenance events. 

Generally, dataset for a particular VCS is typically limited and expensive to obtain as it may require sophisticated instrumentation. However, an entity of interest (e.g., manufacturer or supplier) may have access to a collectively large operational dataset of different VCSs located across many customer sites. Additionally or alternatively, a training dataset can also be generated from high fidelity physics-based models of VCSs. 
Thus, surrogate modeling efforts of vapor compression systems may significantly benefit from the proposed meta-learning approach. 
Large dataset collected across different systems can be used for learning a general NSSM for VCSs, while the limited dataset from a target system can be used for fine-tuning the NSSM weights. 

\subsubsection{Data Generation}
 We use a physics-based simulator of the VCS~\cite{deshpande2022Constrained} for generating the training dataset. 
The model implements two heat exchangers which dominate the overall system dynamics by discretizing them in four finite volumes each, while algebraic models are used for the compressor and expansion valve. Furthermore, a heat exchanger is comprised of interconnecting components that describe the thermal behavior of the pipe wall, the flow of air across the heat exchanger, and the one-dimensional refrigerant pipe-flow, forming an index-1 system of differential algebraic equations (DAEs). The one-dimensional flow of refrigerant through the pipe is modeled using fundamental physical laws, namely, the conservation of mass, momentum, and energy. After order reduction via Pantelides' algorithm, the DAEs are transformed to a set of ordinary differential equations (ODEs). The state vector contains the pressure $\Pres[i]$, specific enthalpy $\Enth[i]$, and temperature $\Temp[i]$ for each finite volume $i\in\{1\cdots 8\}$ and the control inputs to the model are compressor frequency and expansion valve position, thus, $n_x=24$ and $n_u=2$. The measurements include temperatures and pressures at select volume locations in the heat exchangers. In particular, the measurement vector is given by
\begin{align}
y = \begin{bmatrix}
        \Pres[1] & \Temp[2] & \Temp[4] & \Temp[6] & \Pres[8] & \Temp[8]
        \end{bmatrix} \in \Real^{6},
\end{align}
that is, only 6 of the 24 states are directly measured.

The training dataset consisting 250 trajectories was generated using the simulation model such that each trajectory corresponded to a system with a different refrigerant mass. 
This model implements an initialization routine in which all variables are set to their default values. A set of 250 initial conditions to be used for generating the training dataset was constructed by driving the model (with default initialization) to a steady state over a simulation of 50~s with the constant control inputs, as shown in \fig{vcs_overview_diag}[C].  The compressor frequency and EEV position were set to 50~Hz and 300~counts for all runs. This eliminates effects of default initialization to some extent and allows construction of a meaningful, controlled set of initial conditions to test our proposed methods. 
The scenario of varying refrigerant mass was emulated by selecting different pipe lengths for HEXs in different trajectories. We note that, identical default initializations of state variables and different pipe lengths amount to variable refrigerant mass across different trajectories. The set of 250 values for pipe length were randomly sampled from the uniform distribution $\mathcal{U}(20, 40)$~m. The variability of the dynamics due to variation in pipe lengths and refrigerant mass is shown in \fig{vcs_overview_diag}[C] for a representative subset of state variables. 

The model was simulated to generate a set of 250 trajectories over 500~seconds with the afore-mentioned set of initial conditions. The control inputs, compressor frequency (Hz), and expansion valve position were randomly sampled from the uniform distributions $\mathcal{U}(40, 80)$ and $\mathcal{U}(282, 312)$, respectively, to generate a rich dataset with persistently excited inputs. These trajectories were sampled at 0.1~second intervals with sample-and-hold control inputs. 
The set of 250 trajectories was split into 80-20 for source and target dataset, wherein only first the 20 seconds of the target trajectories were used for fine-tuning. Furthermore, 100 source trajectories were split into 60-20-20 for train, test and validation datasets.

High-fidelity simulators used in VCS applications are typically developed based on physics of the system \cite{deshpande2022Constrained}, therefore, they inherently satisfy any fundamental constraints imposed by the physics. An example of such a constraint relevant to VCSs is the pressure gradient constraint, i.e., the non-increasing pressure in the direction of refrigerant flow. In order to ensure consistency with the physical laws, it is imperative that state estimates obtained using a learned NSSM and EKF must also satisfy these physics-informed constraints. We explicitly incorporate these constraints into the NSSM and the filtering algorithm. Specifically, can write the pressure gradient constraint as a linear inequality $G x\!\le\! 0$, and utilize the discussion in Section~\ref{sub:physics_VCS} to improve the estimates.

\subsubsection{Results and Discussion}
State estimation results of the EKF \eqn{deep_cekf} that uses the NSSM \eqn{vcc_model_for_ekf} for predictions are shown in \fig{xy_meta_v_nometa}.
We considered two variations of NSSMs for a comparison. An NSSM whose weights were fine-tuned using Algorithm~\ref{alg:maml_inference} is referred to as Meta-NSSM, whereas NSSM learned without the fine-tuning is referred to as No-Meta-NSSM. 
EKFs using these two variations of NSSMs were tested on target systems. Reference trajectories of one of the target systems and those estimated by EKFs along with the associated confidence bounds are shown in \fig{xy_meta_v_nometa}.

\begin{figure}[htb]
    \centering
    \includegraphics[width=0.8\linewidth]{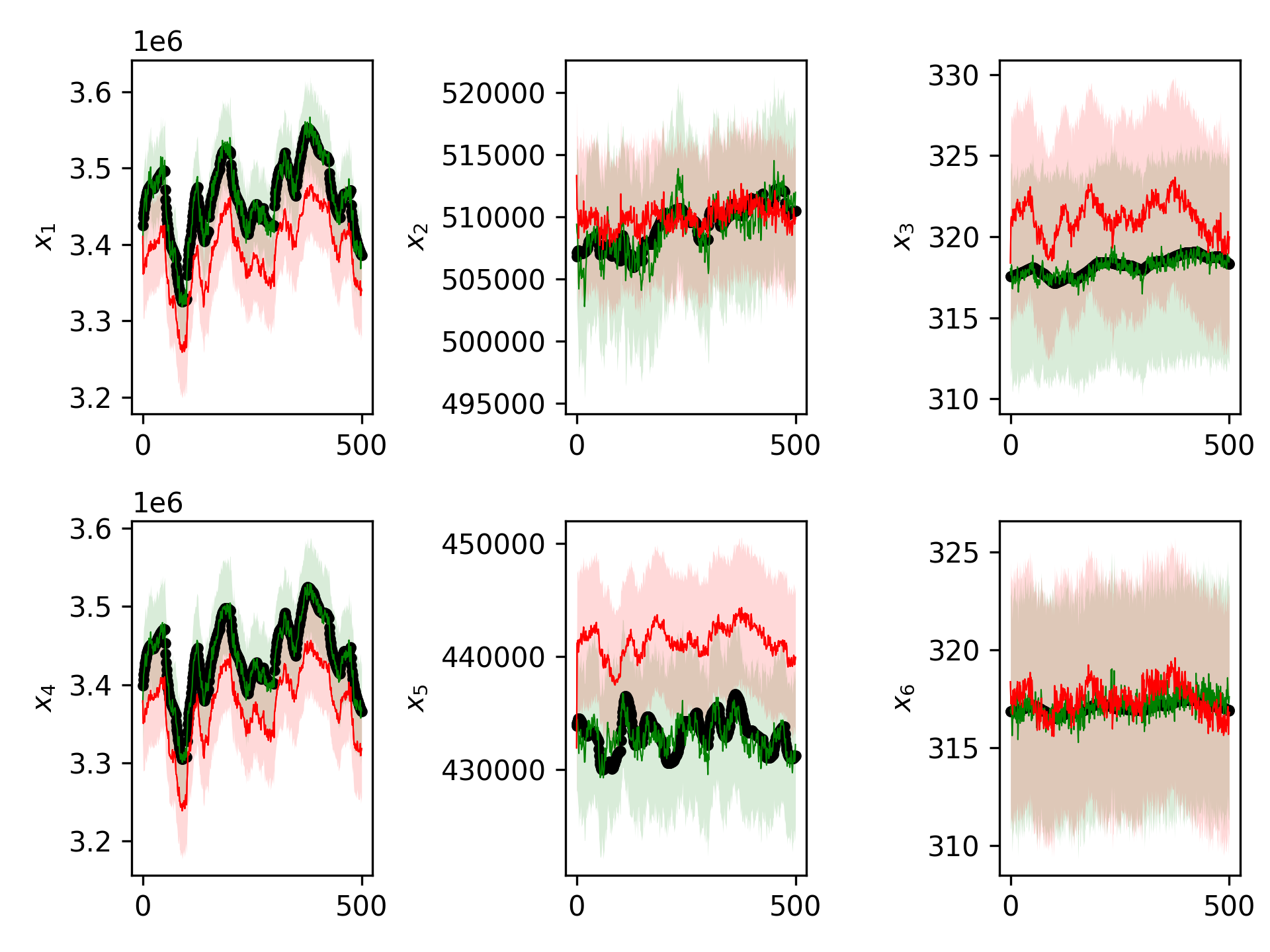}
    \caption{Reference state trajectories (black) and those estimated by EKF using Meta-NSSM (\textcolor{ForestGreen}{green}) and No-Meta-NSSM (\textcolor{red}{red}) as the prediction models. Shaded area around the solid lines denote $3\sigma$-bounds.}
    \label{fig:xy_meta_v_nometa}
\end{figure}

Meta-NSSM-EKF demonstrated significantly smaller estimation errors than No-Meta-NSSM-EKF as observed in \fig{xy_meta_v_nometa}. The superior estimation accuracy of Meta-NSSM-EKF can be primarily attributed to better prediction capabilities of the Meta-NSSM since its weights are fine-tuned or adapted for a particular target system. On the other hand, No-Meta-NSSM-EKF uses a general prediction model learned for a family of VCSs without system-specific adaptations, thus, leading to an overall degraded estimation performance. 

Incorporating physics-informed constraints \eqn{constr_structure_vccs} into the 
NSSM architecture were found to enhance the estimation performance of both NSSMs and EKFs. Although these specific results are not included in this paper for brevity, our findings align with those reported in our previous work \cite{deshpande2023physics}, which discussed the effectiveness of physics-constrained architectures in detail. 

\subsection{{Indoor Localization with Unknown Magnetic Fields}}
\label{subsec:localization}
An another motivating application for combining physics-informed NSSMs, meta-learning, and state estimation is in magnetic-field-based indoor positioning~\cite{Berntorp2023a}.
Magnetic materials present in buildings cause anomalies in the ambient magnetic field \cite{Li2012,Angermann2012}, which can be leveraged for localization.  However, several challenges hinder  high-accuracy positioning using magnetic fields. Most critically, the function mapping the magnetometer position to the magnetic field is unique for each indoor environment, does not have specific structure, and needs to enforce physics-informed constraints (i.e., it has to be a curl-free vector field due to the absence of exciting local currents). Consequently, there is a need to identify the unknown magnetic field, generally using noisy measurements obtained online. With the help of expensive offline instrumentation, we can acquire very limited state-output data  from the actual system, denoted by
$$
\Dtest = \{(x_{0:T^\star}, u_{0:T^\star}, y_{0:T^\star})|\theta^\star\}
$$ 
obtained over the time span $\{0,\ldots, T^\star\}$.
Due to this scarcity of target system data, it is unrealistic to expect that an NSSM trained solely on this data will exhibit satisfactory predictive performance or be able to generalize. Our goal is to identify a NSSM for the target system with limited data, capable of enforcing physics-informed constraints within the NSSM architecture, and to subsequently employ this NSSM for online state estimation despite sensor noise and process uncertainty. Herein, we  demonstrate physics-constrained meta-learning for learning the magnetic field with subsequent state estimation for positioning.

\begin{figure*}[!htbp]
    \centering

        \begin{subfigure}{0.45\textwidth}
        \centering
        \includegraphics[width=\textwidth,clip=true]{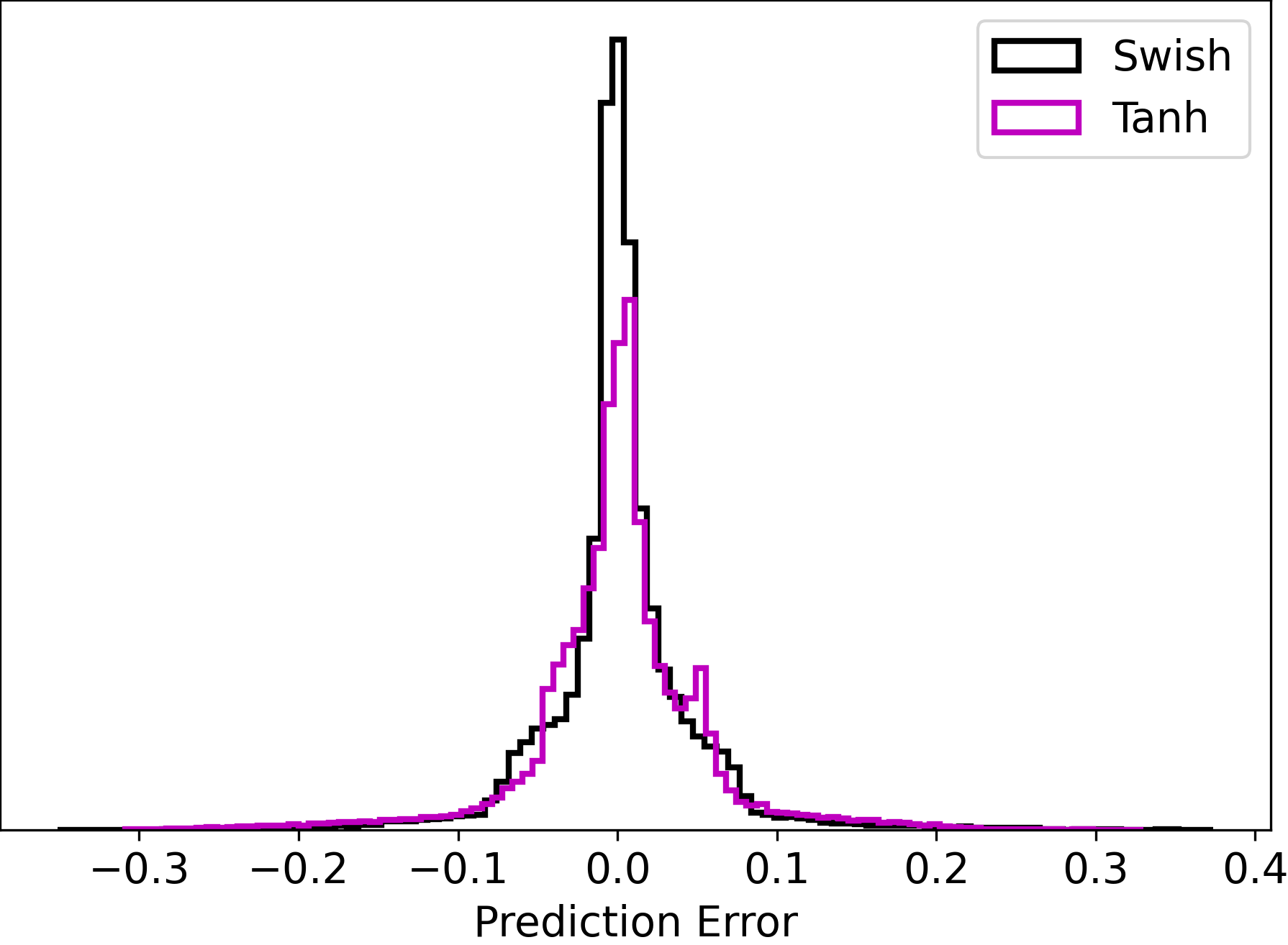}
        \label{fig:image3}
    \end{subfigure}
    \begin{subfigure}{0.45\textwidth}
        \centering
        \includegraphics[width=\textwidth,clip=true]{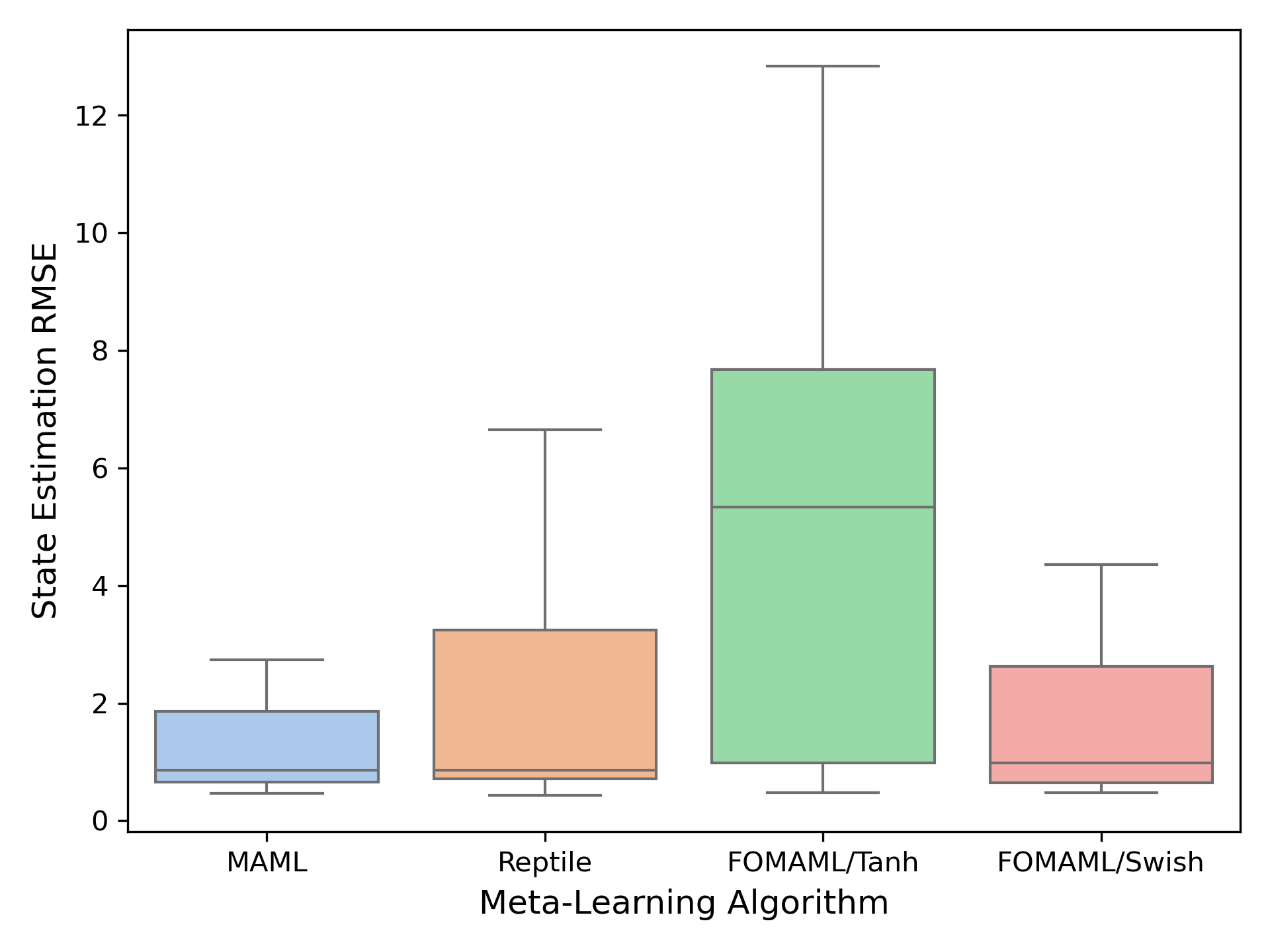}
        \label{fig:image4}
    \end{subfigure}
    
    \caption{(\textit{Left}) Histograms of predictive error distribution between swish and tanh MAML. (\textit{Right}) Boxplots demonstrating state estimation RMSE across different meta-learning EKF for 20 unknown target systems.}
    \label{fig:magfield}
\end{figure*}

\begin{figure}[!ht]
    \centering
    \includegraphics[width=.6\columnwidth]{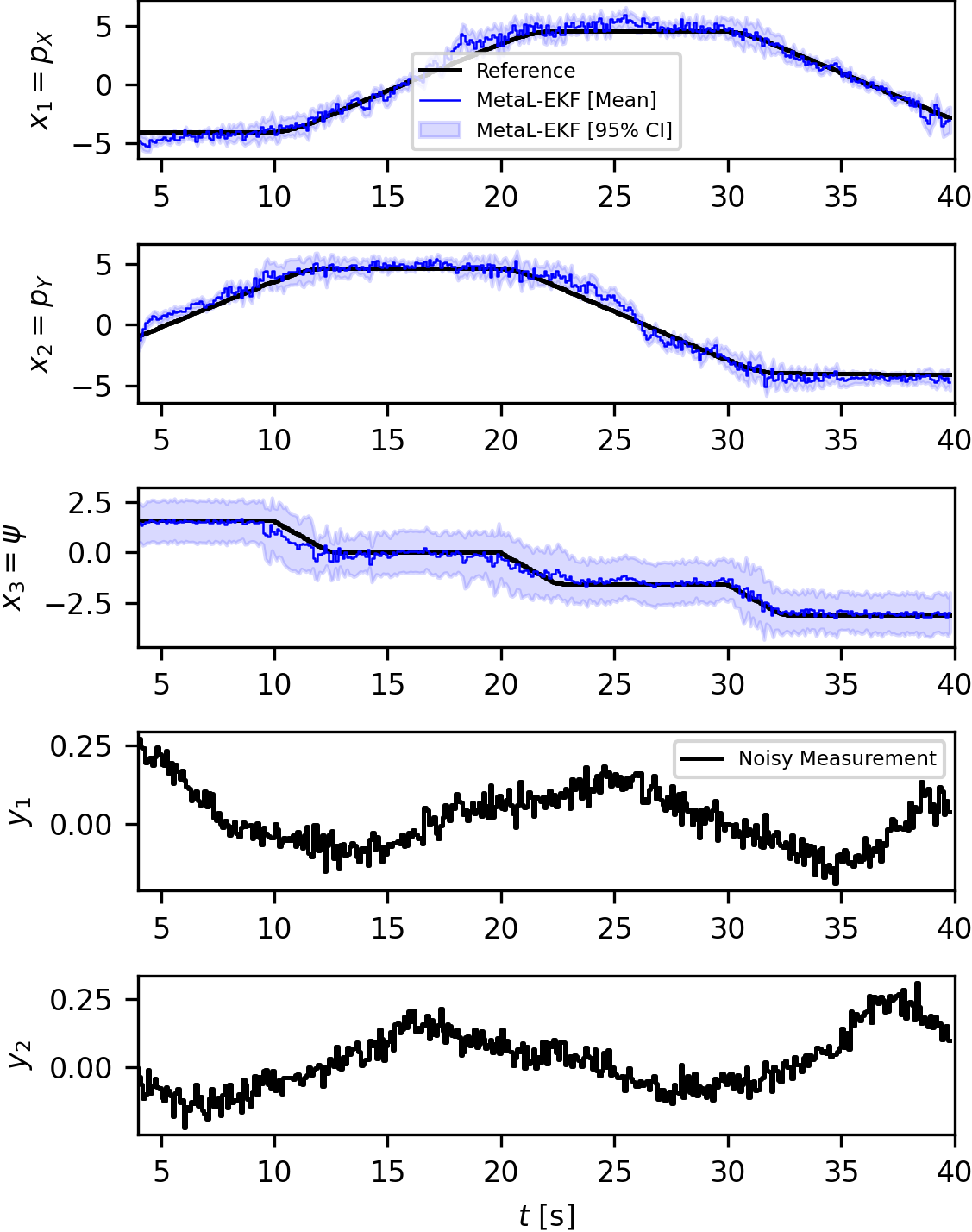}
    \caption{Performance of MetaL-EKF with MAML-FT/Swish for a randomly chosen test vehicle under uncertain magnetic field. Horizontal axis starts at $t=4$~s to reflect that previous data was used as context for meta-inference.}
    \label{fig:states_outputs}
\end{figure}

\subsubsection{System Description}
The scenario we consider is the example used in~\cite{Wahlstroem2013}, tailored to the recursive setting similar to~\cite{Berntorp2023a}. Specifically, the magnetic-field model is from \cite{Wahlstroem2013}, for which we have ground truth and can accurately evaluate our proposed approach. 

Consider a sphere centered at the origin with a radius of $r_0$m having a uniform magnetization of $\mathcal{M} = [m_X, m_Y, 0]^\top$A/m; see Fig.~\ref{fig:magfield}(A). Then, the magnetic-field function $h_{\theta}$ mapping the position $p:=[p_X,p_Y]$ to the magnetic field $y$ is described by
\begin{equation}\label{eq:h_theta}
	h_{\theta} = \left\{\begin{matrix}
		-\mathcal{M}/3 & \text{if } r < r_0 \\
		\frac{m_0}{4\pi}\left(\mathcal{M} /r^3 + 3/r^5 (\mathcal{M}^\top p)p \right ) & \text{if }  r \geq r_0
	\end{matrix}\right .
\end{equation}
where $m_0 = 4/3\pi r_0^3$ and $r = \lVert p \rVert$. With zero-mean Gaussian noise $\eta_t \sim \Ncal (0,\Rmeas_t)$ with covariance $\Rmeas_t$ on the measurements, we  write the measurement equation as
$y_t = h_{\theta}(x_t) + \eta_t$.
We consider a mobile wheeled robot with car-like kinematics described by a kinematic single-track model moving in the $XY$-plane. The kinematic single-track model has three states: the global (planar position)  and the heading angle, $
x = [p_X, p_Y,\psi]$.
The wheel-speed measurements directly provide the velocity $v_X$. The continuous-time model is
\begin{equation}\label{eq:KinematicEquations}
	\dot{x} = \begin{bmatrix}
		v_{X}\cos{(\psi + \beta)}/\cos(\beta)\\
		v^{X}\sin{(\psi + \beta)}/\cos(\beta)\\
		v^{X}\tan{(\delta)}/L
	\end{bmatrix},
\end{equation}
where $L=l_f +l_r$ are the distances from the origin to the front and rear wheel axle, respectively, $\beta = \arctan(l_r\tan(\delta)/L)$ is the kinematic body-slip angle where $\delta$ is the steering angle, and the velocity is related to the wheel speeds by $v_{X}=(\omega_f+\omega_r)R_w/2$; $R_w$ is the wheel radius. After time discretization, we write  \eqref{eq:KinematicEquations} concisely as 
\begin{equation}\label{eq:KinematicST}
	x_{k+1} = f_{\theta}(x_{k},u_t) + w_t,
\end{equation}
with Gaussian zero-mean process noise, $w_t \sim \Ncal(0,\Qproc_t)$, to account for  model mismatch, $u_t= [\delta,\omega_f,\omega_r]$ being control action. The steering controller is a simple pure-pursuit controller tracking a reference path $p_{r}(t)$. In the context of \eqref{eq:nonlinear_system}, the unknown parameters $\theta$ consists of the wheelbase parameters $l_f$, $l_r$, the maximum steering angle $\delta$: these affect the vehicle dynamics; and the magnetization parameters $m_X$, $m_Y$, the magnetic-field radius $r_0$: these affect the magnetic vector field.

\subsubsection{Implementation Details}
In order to construct a meta-learning dataset, we need to generate source and target systems. To this end, we assume that the range of $l_f, l_r \in [0.02, 0.25]$, $m_X, m_Y \in [-1.5, 1.5]$, the maximum steering angle $\delta\in[10, 20]$, and $r_0\in [1.5, 4.5]$. Then we construct 80 source and 20 target systems by randomly sampling parameters from these ranges, and executing simulations that yield $x$, $u$, and $y$ data, for each source and target system. The simulations last for 40s, with a sampling period of 0.1s and identical references to be tracked by the vehicle. Despite the common reference to be tracked, we have observed (we do not show a plot due to limited space) significant variation within $\Dtrain$ and $\Dtest$ due to variation in $\theta$.

After a preliminary neural architecture search, the dimension of the latent space was selected to be $n_{\psi} = 128$. Each of the components of the NSSM was implemented with fully connected layers activated by \texttt{swish} functions. The graph of the encoder and two decoders were selected to be $\ec_{xu}: 6-256-128-128-64-32-32-128$, $\dc_{x}: 128-32-64-128-256-3$ with 3 updated states as outputs, and $\dc_{y}:128-64-64-128-128-256-256-1$, since the output of this is a scalar potential. The state transition operator $\mathcal{A}_{\psi}$ was implemented network with six fully-connected layers of identical hidden dimension of 256.
For meta-inference, the last 2 layers of $\ec_{xu}$ and all the layers of $\mathcal A_\psi$, $\dc_x$ and $\dc_y$ were adapted with target system data. The first few layers of $\ec_{xu}$ were fixed after meta-training. 
We utilize the results of Section \ref{sub:physics_mag} to enforce curl-free constraint on the learned magnetic field. 
We reiterate that in this case study, $\mathcal P_y\equiv\vgrad$ is not a trainable neural network, it merely denotes the vector differential operator that differentiates the output of $\dc_{y}$ with respect to $p$. 

While one may argue that this network is unnecessarily deep and the base-learner could have been constructed with fewer layers, the reason we chose many layers (other than just to improve expressivity across the family of source and target systems) was to investigate whether the \texttt{swish} activation truly avoids vanishing gradients in deep networks, and whether some empirical advantages may be obtained compared to other activation functions such as the \texttt{tanh} function that have been previously considered. 

\subsubsection{Results and Discussion}
The meta-learned NSSM was trained offline on the source dataset for 30000 epochs using the Adam optimizer with fixed learning rates $\beta_{\sf out}=10^{-4}$, $\beta_{\sf in}=10^{-3}$, and $M=10$ adaptation iterations in accordance with Algorithm~\ref{alg:maml}; an NVIDIA 3090X GPU was used for training, and the total training time was at most 22 hours. The weights of the NSSM were saved when the validation loss decreased.
For meta-inference, we use 20\% of the  target system data, which is the initial $T_c=8$s out of the total simulation time of $T_\ell=40$s. The performance of the NSSM was evaluated in terms of prediction and estimation errors for the remaining  $80\%$ of the target system's trajectory never seen by the NSSM. The CPU time for meta-inference adaptation of the non-fixed layers was 0.58 seconds on average.

Our numerical results are presented in~\fig{magfield}, ~\fig{states_outputs}, and Fig.~\fig{magfield_err}. In Fig.~\fig{magfield_err}, we compare the magnetic field estimation error $y - \hat y$ between the  the true field $h_\theta$, obtained by evaluating $h_\theta$ in~\eqref{eq:h_theta} parameterized with
$m_X = 1$, $m_Y=1$, $r_0 = 3$m and the estimated magnetic field using the meta-learned \texttt{swish}-activated (black) and \texttt{tanh}-activated (magenta) NSSM, evaluated over a regular 200 $\times$ 200  grid in a 2-D spatial domain. Clearly, for both estimates, the error is small:  that is,  the learned NSSM closely approximates the magnetic field throughout the domain, with noticeable differences only around a small neighborhood of the region of discontinuity at $r_0=3$m. The \texttt{tanh}-activated NSSM induces larger approximation errors than the \texttt{swish}-activated NSSM, especially between: this is also cross-checked by computing the root-mean-squared error (RMSE) of approximation on the 100 $\times$ 100 grid: for the \texttt{swish} activated NSSM, the RMSE is $6.72\times 10^{-4}$ compared to  $8.03\times 10^{-4}$ for \texttt{tanh}-activation. The distribution of the errors is also shown via histograms in the left subplot of~\fig{magfield}, where we see that the \texttt{swish}-based NSSM has a tighter variance around zero, with many more time-points with estimation errors near-zero. 

We used the meta-learned NSSM as a predictive model for implementing an EKF of the form \eqn{deep_cekf_all}, referred to as the {\it MetaL-EKF}. 
In the right subplot of~\fig{magfield}, we compare the state estimation accuracy of MetaL-EKF implemented with different meta-learned NSSMs. Boxplots of the cumulative state estimation error norm $\sum \|x - \hat x\|$ are reported for 20 unknown target systems. In particular, we consider four variants of meta-learning, two of which are the most popular meta-learning algorithms. The first boxplot (blue) is MAML approach with the proposed \texttt{swish} activation, the second is Reptile (orange) with \texttt{swish} activation, the third (green) is FO-MAML with \texttt{tanh} activation, and the final (pink) is the proposed FO-MAML approach with \texttt{swish} activation. It is immediately clear that the \texttt{swish} activation consistently improves  performance. The full fine-tuned MAML outperforms the rest, although the Reptile and FO-MAML algorithms are comparable in average performance, with FO-MAML showing better robustness (tighter confidence intervals) across target systems. 

The states and outputs for a particular target system (chosen randomly) is shown in Fig.~\ref{fig:states_outputs}, along with their estimates and 95\% confidence intervals, with 10\% data used for adaptation (first 4 seconds). MetaL-EKF demonstrates good estimation accuracy for the most part of the unseen trajectory. Relatively larger deviations of estimated state trajectories from the true trajectory and larger state error covariances at some points can be attributed to sharp turns in the vehicle trajectory. Despite these infrequent deviations, MetaL-EKF quickly decreases the gap between the estimated and true trajectories as the vehicle moves past the turning maneuvers, which can be attributed to the predictive capability of the meta-learned MAML+FT/Swish model.

\section*{Acknowledgments}
The authors would like to thank Maarten Schoukens at the Eindhoven University of Technology for support on the Bouc-Wen dataset. We would also like to thank Dario Piga and Marco Forgione at the Dalle Molle Institute for Artificial Intelligence in Lugano, Switzerland for constructive feedback.

\bibliographystyle{IEEEtran}
\bibliography{refs}

\appendices

\section{Overall meta-learning process with MAML and system identification data.}
\begin{figure}[!ht]
\centering    
\includegraphics[clip,width=\columnwidth]{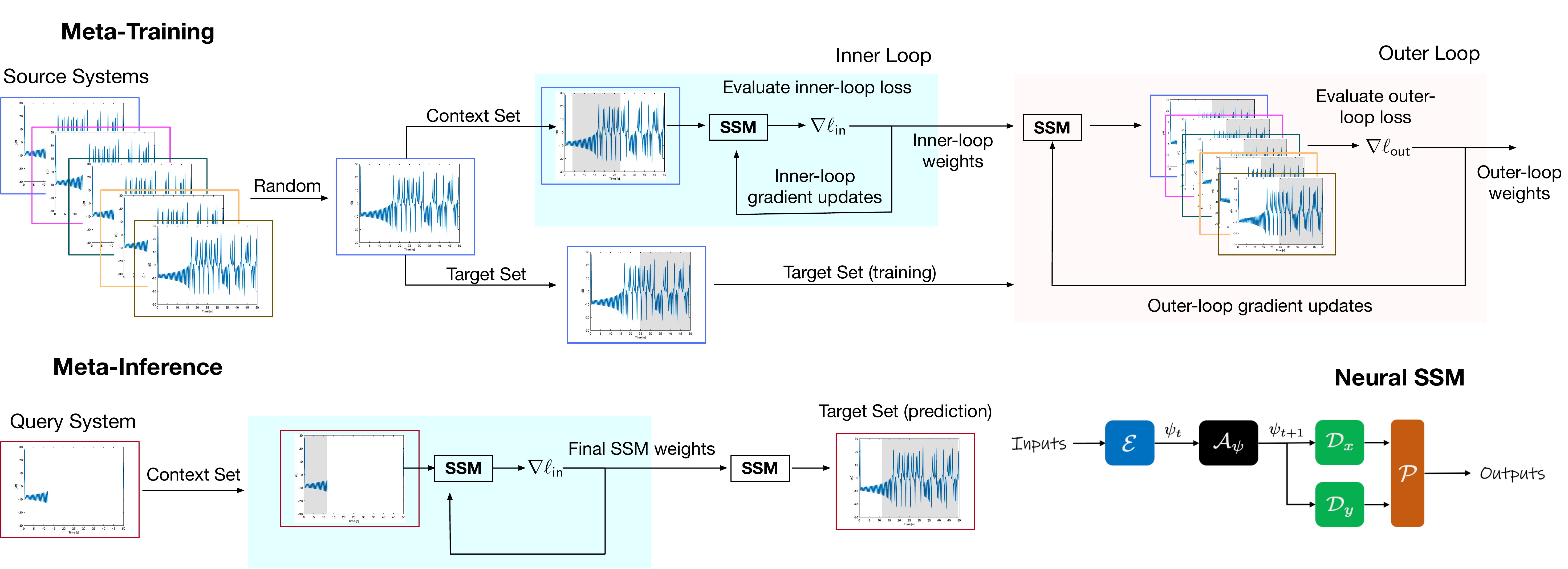}
\caption{Schematic diagram of the overall meta-learning process.}
\label{fig:maml-schema}
\end{figure}

\newpage

\section{Pseudocode for MAML and ANIL.}
\begin{algorithm}[!ht]
\caption{Meta-training SSM with MAML/ANIL}\label{alg:maml}
\begin{algorithmic}[1]
\Require $\omega\leftarrow $ weights of neural SSM
\Require $\Dtrain\leftarrow$ source systems' dataset
\Require $\omega_{\sf in}\subseteq \omega$ \Comment{ANIL: $\omega_{\sf in}\subset \omega$, MAML: $\omega_{\sf in}= \omega$}
\Require $\beta_{\sf in}$, $\beta_{\sf out}$, $M$ \Comment{learning rates and \# iters}
\State Randomly initialize $\omega$
\While{not done} \Comment{\textsc{outer-loop}}
\State Sample batch from $\Dtrain$
\For{$b=1$ to $B$} \Comment{\textsc{inner-loop}}
\State Partition data into $\mathcal{C}^{b}$ and $\mathcal{T}^{b}$
\State $\omega_0^b\leftarrow\omega$ \Comment{copy current weights}
\For{$m=1$ to $M$} \Comment{adaptation steps}
\For{$\omega_l \in \omega_{\sf in}$} \Comment{step through layers}
\State $(\omega_l)^b_m \leftarrow$ update using~\eqref{eq:maml_inner} or~\eqref{eq:anil_inner}
\EndFor
\EndFor
\EndFor
\State $\omega\leftarrow$ update using~\eqref{eq:maml_outer}
\EndWhile
\State Return $\omega_\infty\leftarrow$ final trained weights
\end{algorithmic}
\end{algorithm}

\begin{algorithm}[!ht]
\caption{SSM inference with MAML/ANIL}\label{alg:maml_inference}
\begin{algorithmic}[1]
\Require $\omega_\infty\leftarrow $ weights of meta-trained neural SSM
\Require $\Dtgt\leftarrow$ target system dataset
\Require $\omega_{\sf in}\subseteq \omega$ \Comment{ANIL: $\omega_{\sf in}\subset \omega$, MAML: $\omega_{\sf in}= \omega$}
\Require $\beta_{\sf in}$, $M$ \Comment{learning rates and \# iters} 
\State $\mathcal C^{\star}\leftarrow$ all available data in $\Dtgt$ 
\State $\omega_0^\star\leftarrow\omega_\infty$ \Comment{use meta-trained weights}
\For{$m=1$ to $M$} \Comment{online adaptation}
\For{$\omega_l \in \omega_{\sf in}$} 
\State $(\omega_l)^\star_m \leftarrow$ update using~\eqref{eq:anil_inner}
\EndFor
\EndFor
\State Return $\omega_M^\star\leftarrow$  target NSSM predictive model
\end{algorithmic}
\end{algorithm}

\vfill
\clearpage
\section{Supplementary information for Bouc-Wen Example.}
\begin{figure}[!ht]
\centering
\includegraphics[width=0.9\columnwidth,clip]{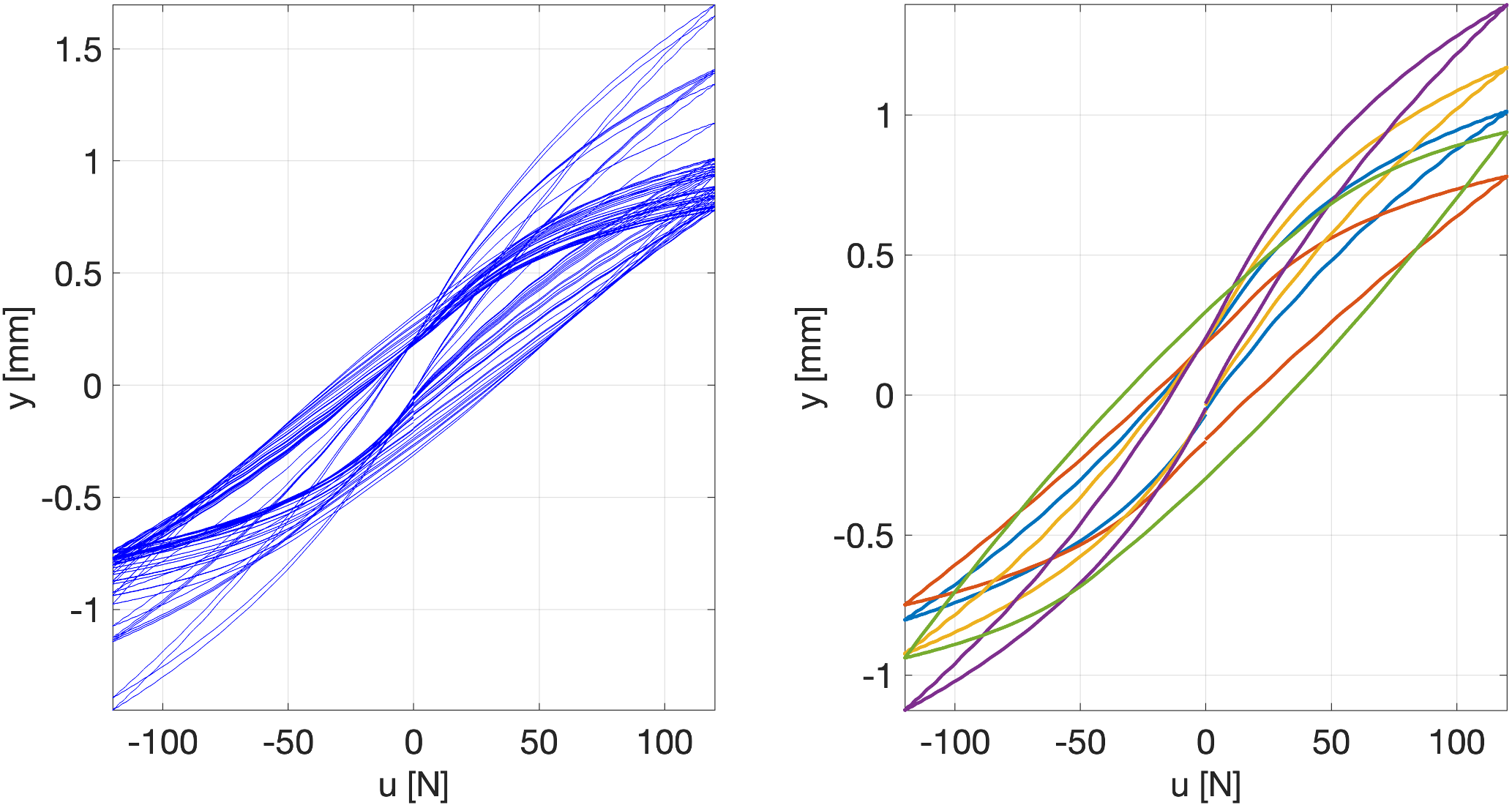}
\caption{(\textit{Left}) Various hysteresis loops over all source systems by varying parameters. (\textit{Right}) Clusters obtained by clustering hysteresis loops.}
\label{fig:bw_hyster}
\end{figure}

\newpage
\section{Supplementary information for Localization Case Study.}
\begin{figure}[!ht]
   \centering    \includegraphics[width=0.79\columnwidth]{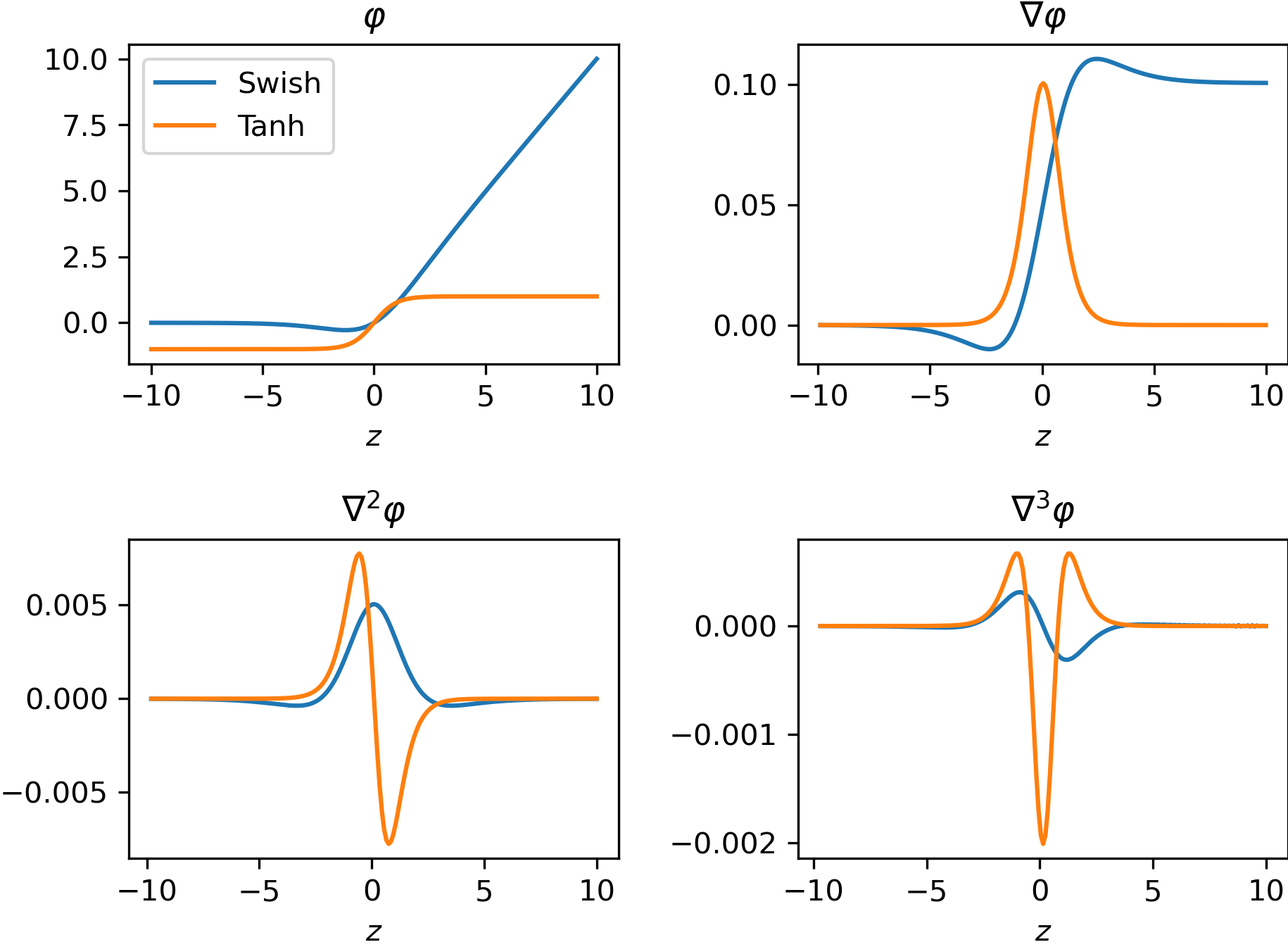}
   \caption{The unbounded \texttt{swish} and bounded \texttt{tanh} activation function and their derivatives.}
\label{fig:swish}
\end{figure}

\begin{figure*}[!htbp]
    \centering
        \begin{subfigure}{0.48\textwidth}
        \centering
        \includegraphics[width=\textwidth, clip=true]{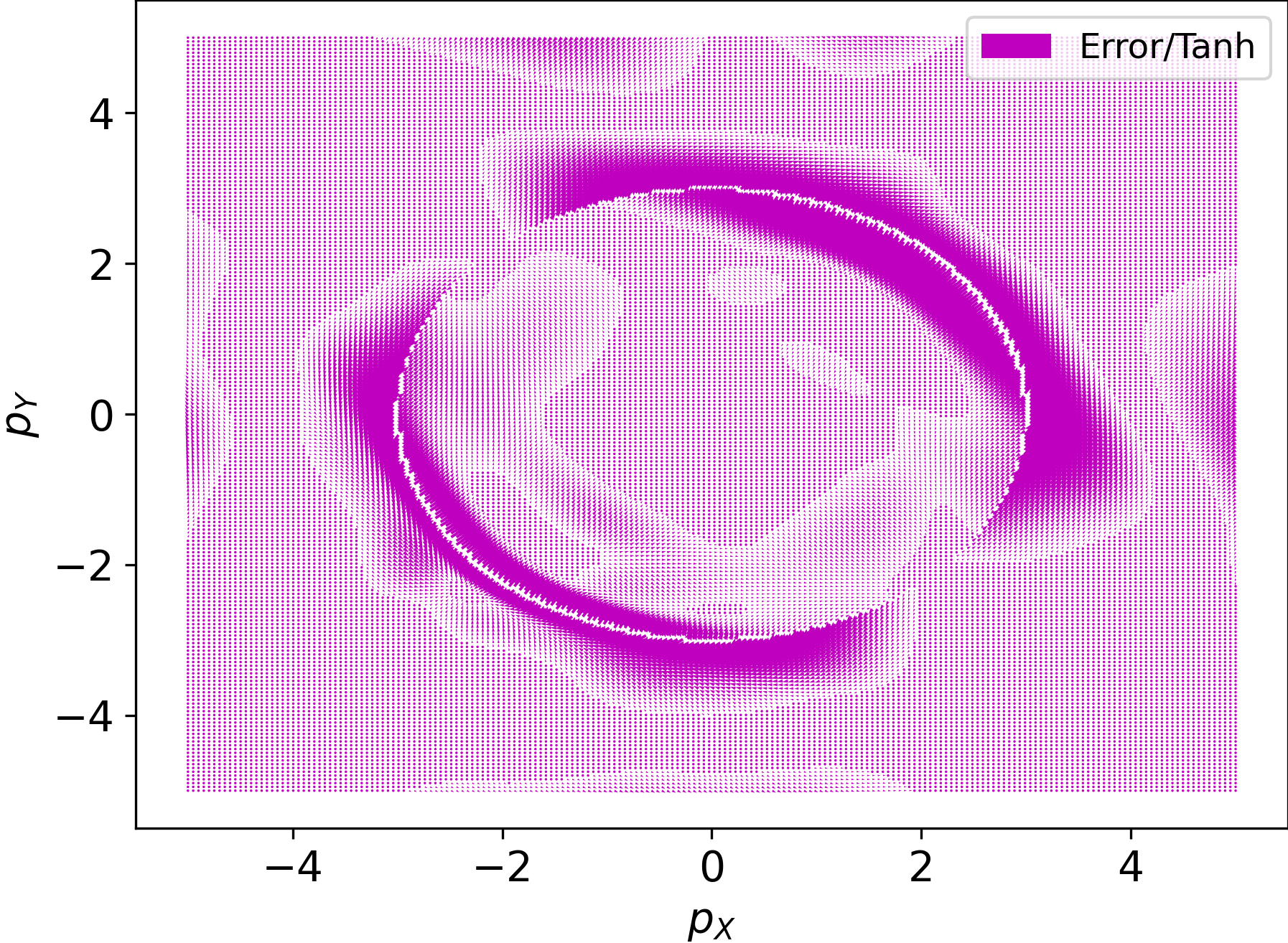}
        \label{fig:image1}
    \end{subfigure}
    \begin{subfigure}{0.48\textwidth}
        \centering
        \includegraphics[width=\textwidth, clip=true]{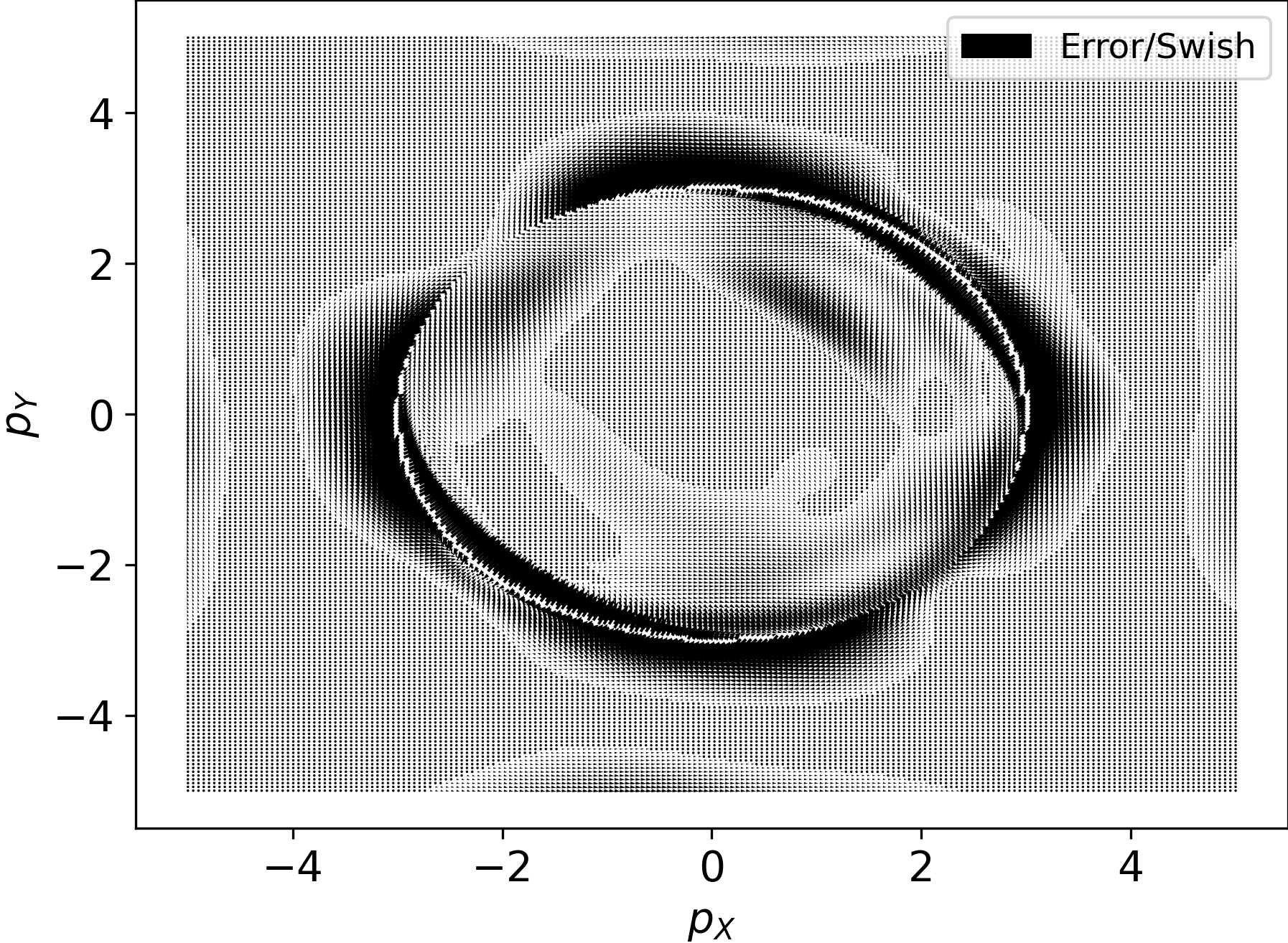}
        \label{fig:image2}
    \end{subfigure}
    
    \caption{Illustration of magnetic field estimation error with MAML with different activation functions.}
    \label{fig:magfield_err}
\end{figure*}

\end{document}